\newcommand{\shortversion}[1]{}
\newcommand{\longversion}[1]{#1}
\newcommand{\citex}[1]{\citeauthor{#1}~\shortcite{#1}}
\newtheorem{lemma}{Lemma} 
\newtheorem{theorem}{Theorem} 
\newtheorem{corollary}{Corollary}
\newcommand{\SB}{\{\,}%
\newcommand{\SM}{\;{:}\;}%
\newcommand{\SE}{\,\}}%
\newcommand{\Card}[1]{|#1|}
\let\phi=\varphi
\let\epsilon=\varepsilon
\newcommand{\UP}{\text{\normalfont{UP}}}
\newcommand{\Nat}{\mathbb{N}}
\newcommand{\scope}{\mathit{scope}}
\newcommand{\dom}{\mathit{dom}}
\newcommand{\NP}{\text{\normalfont NP}}
\newcommand{\coNP}{\text{\normalfont coNP}}
\newcommand{\FPT}{\text{\normalfont FPT}}
  \def\leftcite{\@up[}\def\rightcite{\@up]}
  \def\cite{\def\citeauthoryear##1##2{\def\@thisauthor{##1}%
               \ifx \@lastauthor \@thisauthor \relax \else##1, \fi ##2}\@icite}
  \def\shortcite{\def\citeauthoryear##1##2{##2}\@icite}
  \def\citeauthor{\def\citeauthoryear##1##2{##1}\@nbcite}
  \def\citeyear{\def\citeauthoryear##1##2{##2}\@nbcite}
  \def\@icite{\leavevmode\def\@citeseppen{-1000}%
   \def\@cite##1##2{\leftcite\nobreak\hskip 0in{##1\if@tempswa , ##2\fi}\rightcite}%
   \@ifnextchar [{\@tempswatrue\@citex}{\@tempswafalse\@citex[]}}
  \def\@nbcite{\leavevmode\def\@citeseppen{1000}%
   \def\@cite##1##2{{##1\if@tempswa , ##2\fi}}%
   \@ifnextchar [{\@tempswatrue\@citex}{\@tempswafalse\@citex[]}}
  \def\@citex[#1]#2{%
    \def\@lastauthor{}\def\@citea{}%
    \@cite{\@for\@citeb:=#2\do
      {\@citea\def\@citea{;\penalty\@citeseppen\ }%
       \if@filesw\immediate\write\@auxout{\string\citation{\@citeb}}\fi
       \@ifundefined{b@\@citeb}{\def\@thisauthor{}{\bf ?}\@warning
         {Citation `\@citeb' on page \thepage \space undefined}}%
       {\csname b@\@citeb\endcsname}\let\@lastauthor\@thisauthor}}{#1}}
  \def\@biblabel#1{\def\citeauthoryear##1##2{##1, ##2}\@up{[}#1\@up{]}\hfill}
  \def\@up#1{\leavevmode\raise.2ex\hbox{#1}}
 \def\thm@space@setup{%
   \thm@preskip=4pt \thm@postskip=\thm@preskip}
 \renewenvironment{proof}[1][\proofname]{%
   \par  \setlength{\topsep}{0pt}  \pushQED{\qed}%
   \normalfont 
   \trivlist  \item[\hskip\labelsep        \itshape
   #1\@addpunct{.}]\ignorespaces}{%
   \popQED\endtrivlist\@endpefalse\vspace{2pt}}
 \def\section{\@startsection{section}{1}{\z@}{-9pt plus
     -3pt minus -2pt}{3pt plus 2pt minus 1pt}{\Large\bf\raggedright}}
 \def\subsection{\@startsection{subsection}{2}{\z@}{-7pt plus
     -2pt minus -2pt}{2pt plus 2pt minus 1pt}{\large\bf\raggedright}}
 \def\subsubsection{\@startsection{subparagraph}{3}{\z@}{-5pt plus
    -2pt minus -1pt}{1pt plus 1pt minus 1pt}{\normalsize\bf\raggedright}}
 \def\paragraph{\@startsection{paragraph}{4}{\z@}%
                                     {1pt \@plus1ex \@minus.2ex}%
                                     {-1em}%
                                     {\normalfont\normalsize\bfseries}}
\newtheorem{boldclaim}{Claim}
\newtheorem{property}{Property}
\newcommand{\set}[1]{\{ #1 \}}
\newcommand{\NV}{\textsc{NValue}\xspace}
\newcommand{\AMNV}{\textsc{AtMost-NValue}\xspace}
\newcommand{\ALNV}{\textsc{AtLeast-NValue}\xspace}
\newcommand{\EGClong}{\textsc{Extended Global Cardinality}\xspace}
\newcommand{\EGC}{\textsc{EGC}\xspace}
\newcommand{\AD}{\textsc{AllDifferent}\xspace}
\newcommand{\nbholes}{\text{\#holes}}
\newcommand{\fpt}{fixed-parameter tractable\xspace}
\newcommand{\mydef}[1]{\emph{#1}} 
\newcommand{\mydefalt}[2]{\emph{#1}} 
\newcommand{\Bessiere}{Bessi{\`e}re\xspace}
\newcommand{\hac}{hyper arc consistent\xspace}
\newcommand{\HAC}{HAC\xspace}
\newcommand{\lep}{\mathsf{l}}
\newcommand{\rep}{\mathsf{r}}
\newcommand{\ivl}{\mathsf{ivl}}
\newcommand{\RedIncl}{\textbf{Red-$\subseteq$}\xspace}
\newcommand{\RedDom}{\textbf{Red-Dom}\xspace}
\newcommand{\RedUnit}{\textbf{Red-Unit}\xspace}
 \renewenvironment{abstract}%
 {\centerline{\Large\bf Abstract}
 \vspace{0.5ex}
 \begin{list}{}%
          {\setlength{\leftmargin}{11pt}%
          \setlength{\rightmargin}{11pt}}%
          \item[]%
 }
 {\end{list}}
\title{Kernels for Global Constraints\thanks{Research funded by the ERC (COMPLEX REASON, 239962).}}
 \author{Serge Gaspers \and Stefan Szeider\\
 Vienna University of Technology\\
 Vienna, Austria\\
 gaspers@kr.tuwien.ac.at, stefan@szeider.net}
 \author{Serge Gaspers \and Stefan Szeider}
 \date{\normalsize Vienna University of Technology\\
 Vienna, Austria\\
 gaspers@kr.tuwien.ac.at, stefan@szeider.net}
\begin{document}

  \maketitle

  \begin{abstract}
    \Bessiere \emph{et al.}~(AAAI'08) showed that several intractable
    global constraints can be efficiently propagated when certain natural
    problem parameters are small. In particular, 
    the complete
    propagation of a global constraint is fixed-parameter tractable in $k$ --
    the number of holes in domains -- whenever bound consistency can be enforced
    in polynomial time; this applies 
    to the global constraints \AMNV and \EGClong (\EGC).

   \smallskip

    In this paper we extend this line of research and introduce the
    concept of \emph{reduction to a problem kernel}, a key concept of
    parameterized complexity, to the field of global constraints. In
    particular, we show that the consistency problem for \AMNV constraints admits a
    linear time reduction to an equivalent instance on $O(k^2)$ variables and domain values. 
    This small kernel can be used to speed up the complete
    propagation of \NV constraints. We contrast this result by
    showing that the consistency problem for \EGC constraints does not admit a reduction to a
    polynomial problem kernel unless the polynomial hierarchy collapses.

  \end{abstract}

\section{Introduction}

Constraint programming (CP) offers a powerful framework for efficient
modeling and solving of a wide range of hard problems
\cite{RossiVanBeekWalsh06}. At the heart of efficient CP solvers are
so-called \emph{global constraints} that specify patterns that
frequently occur in real-world problems. Efficient propagation
algorithms for global constraints help speed up the solver
significantly~\cite{HoeveKatriel06}. For instance, a frequently
occurring pattern is that we require that certain variables must all
take different values (e.g., activities requiring the same resource must
all be assigned different times).  Therefore most constraint solvers
provide a global \AD constraint and algorithms for its
propagation. Unfortunately, for several important global
constraints a complete propagation is NP-hard, and one switches
therefore to incomplete propagation such as bound
consistency~\cite{BessiereEtAl04}. In their AAAI'08 paper,
\citex{BessiereEtal08} showed that a complete propagation of several intractable constraints can
efficiently be done as long as certain natural problem
parameters are small, i.e., the propagation is
\emph{fixed-parameter tractable} \cite{DowneyFellows99}.  Among others,
they showed fixed-parameter tractability of the \ALNV
and \EGClong (\EGC) constraints parameterized
by the number of ``holes'' in the domains of the variables. If there are no holes, then all
domains are intervals and complete propagation is polynomial by classical
results; thus the number of holes provides a way of \emph{scaling up} the
nice properties of constraints with interval domains.


In this paper we bring this approach a significant step forward, picking
up a long-term research objective suggested by \citex{BessiereEtal08}\longversion{ in
their concluding remarks}: whether intractable global constraints admit a
\emph{reduction to a problem kernel} or \emph{kernelization}.

Kernelization is an important algorithmic technique that has become the
subject of a very active field in state-of-the-art combinatorial
optimization~(see, e.g., \longversion{the references in }%
\cite{Fellows06,GuoNiedermeier07,Rosamond10}). \longversion{Kernelization}\shortversion{It} can be
seen as a \emph{preprocessing with performance guarantee} that reduces
\longversion{a problem}\shortversion{an} instance in polynomial time to an equivalent instance, the
\emph{kernel}, whose size is a function of the parameter
\cite{Fellows06,GuoNiedermeier07,Fomin10}.

Once a kernel is obtained, the time required to solve the instance is
a function of the parameter only and therefore independent of the input
size. Consequently one aims at kernels that are as small as possible;
the kernel size provides a performance guarantee for the preprocessing.
Some NP-hard\longversion{ combinatorial} problems such as $k$-\textsc{Vertex Cover}
admit polynomially sized kernels, for others such as $k$-\textsc{Path} an
exponential kernel is the best one can hope for
\cite{BodlaenderDowneyFellowsHermelin09}.

Kernelization fits
perfectly into the context of CP where preprocessing and data
reduction (e.g., in terms of local consistency algorithms, propagation,
and domain filtering) are key methods \cite{Bessiere06,HoeveKatriel06}.

\paragraph{Results} 
Do the global constraints \AMNV and \EGC admit
polynomial kernels?  We show that the answer is \emph{``yes''} for the
former and \emph{``no''} for the latter. 

More specifically, we present a \emph{linear time} preprocessing
algorithm that reduces an \AMNV constraint $C$ with
$k$ holes to a consistency-equivalent \AMNV constraint
$C'$ of size polynomial in $k$. In fact, 
$C'$ has at most $O(k^2)$ variables and $O(k^2)$ domain
values.
We also give an improved branching algorithm checking the consistency of $C'$
in time $O(1.6181^k)$.
The combination of kernelization and branching yields efficient algorithms for the consistency and
propagation of (\textsc{AtMost}-)\textsc{NValue} constraints.

On the other hand, we show that a similar
result is unlikely for the \EGC constraint: One cannot reduce an \EGC
constraint $C$ with $k$ holes in polynomial time to a
consistency-equivalent \EGC constraint
$C'$ of size polynomial in $k$.  This result is subject
to the complexity theoretic assumption that $\NP \not \subseteq \coNP/\text{\normalfont poly}$ whose failure implies the collapse
of the Polynomial Hierarchy to its third level, which is
considered highly unlikely by complexity theorists.

\section{Formal Background}

\paragraph{Parameterized Complexity}

\newcommand{\PP}{P}
\newcommand{\QQ}{Q}

A \emph{parameterized problem} $\PP$ is a subset of $\Sigma^* \times \Nat$
for some finite alphabet $\Sigma$. For a problem instance $(x,k) \in
\Sigma^* \times \Nat$ we call $x$ the main part and $k$ the parameter.
A parameterized problem $\PP$ is \emph{\fpt} (\FPT) if a
given instance $(x, k)$ can be solved in time $O(f(k) \cdot p(\Card{x}))$
where $f$ is an arbitrary computable function of $k$ and $p$ is a
polynomial in the input size $\Card{x}$.

\paragraph{Kernels} A \emph{kernelization} for a parameterized problem $\PP
\subseteq \Sigma^* \times \Nat$ is an algorithm that, given $(x, k) \in
\Sigma^* \times \Nat$, outputs in time polynomial in $\Card{x}+k$ a pair
$(x', k') \in \Sigma^* \times \Nat$ such that (i)~$(x,k) \in \PP$ \myIff 
$(x',k') \in \PP$ and (ii)~$\Card{x'}+k′'\leq g(k)$, where $g$ is
an arbitrary computable function. The function $g$ is referred to as the
\emph{size} of the kernel. If $g$ is a polynomial then we say that $\PP$
admits a \emph{polynomial kernel}.




\paragraph{Global Constraints}

An instance of the constraint satisfaction problem (CSP)
consists of a
set of variables, each with a finite domain of values, and a set of
constraints specifying allowed combinations of values for some subset of
variables. We denote by $\dom(x)$ the domain of a variable $x$ and by
$\scope(C)$ the subset of variables involved in a constraint $C$.  An
\emph{instantiation} is an assignment $\alpha$ of values to variables
such that $\alpha(x)\in \dom(x)$ for each variable $x \in \scope(C)$.  A
constraint can be specified extensionally by listing all legal
instantiations of its variables or intensionally, by giving an
expression involving the variables in the constraint scope
\cite{Smith06}. \emph{Global constraints} are certain extensionally
described constraints involving an arbitrary number of variables
\cite{HoeveKatriel06}.  For example, an instantiation is legal for an
\AD global constraint $C$ if 
it assigns pairwise different values to the variables in $\scope(C)$.

\paragraph{Consistency} 
A global constraint $C$ is \emph{consistent} if there is a legal
instantiation of its variables. \longversion{The constraint }$C$ is \emph{\hac} (\emph{\HAC{}}) if
for each variable $x\in \scope(C)$ and each value $v\in \dom(x)$, there
is a legal instantiation $\alpha$ such that $\alpha(x)=v$ (in that
case we say that $C$ supports $v$ for $x$). In the literature, \HAC is also called
\emph{domain consistent} or \emph{generalized arc consistent}.
\longversion{The constraint }$C$ is \emph{bound consistent} if when a variable $x\in \scope(C)$
is assigned the minimum or maximum value of its domain, there are compatible values
between the minimum and maximum domain value for all other variables in $\scope(C)$.
The main algorithmic problems
for a global constraint $C$ are the following: 
\emph{Consistency}, to decide whether $C$ is consistent, and
\emph{Enforcing \HAC{}}, to remove from all domains the values that are not
supported by the respective variable.

It is clear that if \HAC can be enforced in polynomial time for a constraint
$C$, then the consistency of $C$ can also be decided in polynomial time
(we just need to see if any domain became empty).  The reverse is true
for constraints that satisfy a certain closure property (see
\cite{HoeveKatriel06}), which is the case for most constraints of
practical use, and in particular for all constraints considered below. The
same correspondence holds with respect to fixed-parameter
tractability. Hence, we will focus mainly on Consistency.

\section{NValue Constraints}

The \NV constraint was introduced by \citex{PachetRoy99}.
For a set of variables
$X$ and a variable $N$,
$\NV(X,N)$ is
consistent if there is an assignment $\alpha$
such that exactly $\alpha(N)$ different values are used for the variables in $X$.
\AD is the special case where $\dom(N)=\set{|X|}$.
\citex{Beldiceanu01} and \citex{BessiereEtal06} decompose \NV constraints into two other global constraints:
\AMNV and \ALNV, which
require that 
at most $N$ or at least $N$ values are used for the variables in $X$,
respectively.
The Consistency problem is $\NP$-complete for \NV and
\AMNV constraints, and polynomial time solvable for 
\ALNV constraints.

\smallskip

For checking the consistency of an \AMNV constraint $C$, we are given an instance $\mathcal{I}$ consisting of a set of variables $X=\set{x_1, \ldots, x_n}$,
a totally ordered set of values $D$, a map $\dom: X \rightarrow 2^D$ assigning a non-empty domain $\dom(x)\subseteq D$ to each variable
$x\in X$, and an integer $N$.\footnote{If $D$ is not part of the input (or is very large), we may construct $D$ by sorting the set of all endpoints of intervals in time $O((n+k) \log(n+k))$. Since, w.l.o.g., a solution
contains only endpoints of intervals, this step does not compromise the correctness.}
A \mydef{hole} in a subset $D'\subseteq D$ is a couple $(u,w) \in D' \times D'$, such that there is a $v \in D \setminus D'$ with $u<v<w$ and there is no $v'\in D'$ with $u<v'<w$. We denote the number of holes in the domain of a variable $x\in X$ by $\nbholes(x)$. 
The parameter of the consistency problem for \AMNV constraints is $k= \sum_{x\in X} \nbholes(x)$. 
%
%
%
An \emph{interval} $I=[v_1,v_2]$ of a variable $x$ is an inclusion-wise maximal hole-free subset of its domain. Its \mydefalt{left endpoint}{left endpoint} $\lep(I)$ and \mydefalt{right endpoint}{right endpoint} $\rep(I)$ are the values $v_1$ and $v_2$, respectively.
Fig.~\ref{fig:input} gives an example of an instance and its interval representation. We assume that instances are given by a succinct description, in which the domain of a variable is given by the left and right endpoint of each of its intervals. As the number of intervals of the instance $\mathcal{I}=(X,D,\dom,N)$ is $n+k$, its size is $|\mathcal{I}|=O(n+|D|+k)$. In case $\dom$ is given by an extensive list of the values in the domain of each variable, a succinct representation can be computed in linear time.

A greedy algorithm by \citex{Beldiceanu01} checks the consistency of an \AMNV constraint in linear time when all domains are intervals (i.e., $k=0$).
Further,  \citex{BessiereEtal08} have shown that Consistency (and Enforcing \HAC) is \FPT, parameterized by the number of holes, for all constraints for which bound consistency can be enforced in polynomial time.
A simple algorithm for checking the consistency of \AMNV goes over all instances obtained from restricting the domain of each variable to one of its intervals, and executes the algorithm of \cite{Beldiceanu01} for each of these $2^k$ instances. The running time of this \FPT\ algorithm is clearly bounded by $O(2^k \cdot |\mathcal{I}|)$.

\begin{figure} \centering
 \shortversion{\begin{tikzpicture}[xscale=0.6,yscale=0.65]}
 \longversion{\begin{tikzpicture}[xscale=0.65,yscale=0.75]}
  \tikzset{ivl/.style={thick},
  var/.style={midway,above=0.5pt}}
  \shortversion{\tikzset{ivlopt/.style={thick,black!60}}}
  \longversion{\tikzset{ivlopt/.style={thick,red}}}
 
  \foreach \x in {1,...,14} \draw (\x,-0.6) node {$\x$};
  \foreach \x in {1,...,14} \draw[thin,black!20] (\x,-0.3) -- (\x,3.3);
  
  \draw (2.5,0.5) node[rectangle,draw] {$N=6$};
  
  \draw[ivl,|-|] (0.9,3)-- node[var] {$x_1$} (2.1,3);
  \draw[ivlopt,|-] (2.9,3)-- node[var] {$x_3$} (4.1,3);
  \draw[ivl,|-|] (4.9,3)-- node[var] {$x_6$} (7.1,3);
  \draw[ivl,|-|] (7.9,3)-- node[var] {$x_9$} (9.1,3);
  \draw[ivlopt,-|] (9.9,3)-- node[var] {$x_3'$} (11.1,3);
  \draw[ivlopt,|-] (11.9,3)-- node[var] {$x_{13}$} (12.1,3);
  \draw[ivlopt,-|] (13.9,3)-- node[var] {$x_{13}'$} (14.1,3);
  \draw[ivlopt,|-] (1.9,2)-- node[var] {$x_2$} (3.1,2);
  \draw[ivl,|-|] (3.9,2)-- node[var] {$x_4$} (6.1,2);
  \draw[ivl,|-|] (6.9,2)-- node[var] {$x_7$} (8.1,2);
  \draw[ivlopt,-|] (9.9,2)-- node[var] {$x_2'$} (10.1,2);
  \draw[ivl,|-|] (10.9,2)-- node[var] {$x_{12}$} (12.1,2);
  \draw[ivl,|-|] (12.9,2)-- node[var] {$x_{15}$} (14.1,2);
  \draw[ivl,|-|] (5.9,1)-- node[var] {$x_8$} (8.1,1);
  \draw[ivl,|-|] (8.9,1)-- node[var] {$x_{10}$} (10.1,1);
  \draw[ivlopt,|-] (10.9,1)-- node[var] {$x_{11}$} (11.1,1);
  \draw[ivlopt,-|] (12.9,1)-- node[var] {$x_{11}'$} (13.1,1);
  \draw[ivlopt,|-] (4.9,0)-- node[var] {$x_5$} (6.1,0);
  \draw[ivlopt,-|] (7.9,0)-- node[var] {$x_5'$} (10.1,0);
  \draw[ivl,|-|] (11.9,0)-- node[var] {$x_{14}$} (13.1,0);
 \end{tikzpicture}
 \caption{\label{fig:input} Interval representation of an \AMNV instance $\mathcal{I}=(X,D,\dom,N)$, with $X=\set{x_1,\ldots, x_{15}}$, $N=6$, $D=\set{1,\ldots,14}$, and $\dom(x_1)=\set{1,2}, \dom(x_2)=\set{2,3,10}$, etc.}
\end{figure}

\longversion{
In the realm of parameterized complexity it is then natural to ask whether \AMNV has a polynomial kernel.
 In the next subsection, we give a linear time kernelization algorithm. We then prove its correctness and that the size of the produced instance can be bounded by $O(k^2)$. In Subsection \ref{subsec:FPTalgo}, we give an \FPT\ algorithm, which uses the kernelization algorithm, for checking the consistency of an \AMNV constraint in time $O(1.6181^k k^2 +|\mathcal{I}|)$. \HAC can then be enforced by applying this algorithm $O(|D|)$ times.
}

\subsection{Kernelization Algorithm}

Let $\mathcal{I}=(X,D,\dom,N)$ be an instance for the consistency problem for \AMNV constraints. 
\longversion{The algorithm is more intuitively described using the interval representation of the instance.}%
The \emph{friends} of an interval $I$ are the other intervals of $I$'s variable. An interval is \mydef{optional} if it has at least one friend, and \emph{required} otherwise. For a value $v\in D$, let $\ivl(v)$ denote the set of intervals containing $v$.

A \emph{solution} for $\mathcal{I}$ is a subset $S\subseteq D$ of at most $N$ values such that there exists an instantiation assigning the values in $S$ to the variables in $X$. 
The algorithm may detect for some value $v\in D$, that, if the problem has a solution, then it has a solution containing $v$. In this case, the algorithm \emph{selects} $v$, i.e., it removes all variables whose domain contains $v$, it removes $v$ from $D$, and it decrements $N$ by one.
The algorithm may detect for some value $v\in D$, that, if the problem has a solution, then it has a solution not containing $v$. In this case, the algorithm \emph{discards} $v$, i.e., it removes $v$ from every domain and from $D$. (Note that no new holes are created  with respect to $D\setminus \set{v}$.)
The algorithm may detect for some variable $x$, that every solution for $(X\setminus \set{x},D,\dom|_{X\setminus \set{x}},N)$ contains a value from $\dom(x)$. In that case, it \emph{removes} $x$. 

The algorithm sorts the intervals by increasing right endpoint (ties are broken arbitrarily). Then, it exhaustively applies the following three reduction rules.

\shortversion{\begin{description}[topsep=0pt, partopsep=0pt, itemsep=0pt]}
\longversion{\begin{description}}
\item{\RedIncl:} If there are two intervals $I,I'$ such that $I'\subseteq I$ and $I'$ is required, then remove the variable of $I$.
\item{\RedDom:} If there are two values $v,v'\in D$ such that $\ivl(v') \subseteq \ivl(v)$, then discard $v'$.
\item{\RedUnit:} If $|\dom(x)|=1$ for some variable $x$, then select the value in $\dom(x)$.
\end{description}


\noindent
In the example from Fig.~\ref{fig:input}, \RedIncl removes the variables $x_5$ and $x_8$ because $x_{10}\subseteq x_5'$ and $x_7\subseteq x_8$, \RedDom removes the values $1$ and $5$, \RedUnit selects $2$, which deletes variables $x_1$ and $x_2$, and \RedDom removes $3$ from $D$. The resulting instance is depicted in Fig.~\ref{fig:beforeScanning}.

After none of \longversion{the previous}\shortversion{these} rules apply, the algorithm scans the remaining intervals from left to right\longversion{ (i.e., by increasing right endpoint)}. An interval that has already been scanned is either a \mydef{leader} or a \mydef{follower} of a 
subset of leaders. Informally, for a leader $L$, if a solution contains $\rep(L)$, then there is a solution containing $\rep(L)$ and the right endpoint of each of its followers. 

\begin{figure} \centering
 \shortversion{\begin{tikzpicture}[xscale=0.6,yscale=0.65]}
 \longversion{\begin{tikzpicture}[xscale=0.65,yscale=0.75]}
  \tikzset{ivl/.style={thick},
  var/.style={midway,above=0.5pt}}
  \shortversion{\tikzset{ivlopt/.style={thick,black!60}}}
  \longversion{\tikzset{ivlopt/.style={thick,red}}}
 
  \draw (5,-0.6) node {$4$};
  \foreach \x in {6,...,14} \draw (\x,-0.6) node {$\x$};
  \foreach \x in {5,...,14} \draw[thin,black!20] (\x,-0.3) -- (\x,3.3);
  
  \draw (6.5,0.5) node[rectangle,draw] {$N=5$};
 
  \draw[ivlopt,|-] (4.9,3)-- node[var] {$x_3$} (5.1,3);
  \draw[ivl,|-|] (5.9,3)-- node[var] {$x_6$} (7.1,3);
  \draw[ivl,|-|] (7.9,3)-- node[var] {$x_9$} (9.1,3);
  \draw[ivlopt,-|] (9.9,3)-- node[var] {$x_3'$} (11.1,3);
  \draw[ivlopt,|-] (11.9,3)-- node[var] {$x_{13}$} (12.1,3);
  \draw[ivlopt,-|] (13.9,3)-- node[var] {$x_{13}'$} (14.1,3);
  \draw[ivl,|-|] (4.9,2)-- node[var] {$x_4$} (6.1,2);
  \draw[ivl,|-|] (6.9,2)-- node[var] {$x_7$} (8.1,2);
  \draw[ivl,|-|] (10.9,2)-- node[var] {$x_{12}$} (12.1,2);
  \draw[ivl,|-|] (12.9,2)-- node[var] {$x_{15}$} (14.1,2);
  \draw[ivl,|-|] (8.9,1)-- node[var] {$x_{10}$} (10.1,1);
  \draw[ivlopt,|-] (10.9,1)-- node[var] {$x_{11}$} (11.1,1);
  \draw[ivlopt,-|] (12.9,1)-- node[var] {$x_{11}'$} (13.1,1);
  \draw[ivl,|-|] (11.9,0)-- node[var] {$x_{14}$} (13.1,0);
 \end{tikzpicture}
 \vspace{-1pt}
 \caption{\label{fig:beforeScanning} Instance obtained from the instance of Fig.~\ref{fig:input} by exhaustively applying rules \RedIncl, \RedDom, and \RedUnit.}
\end{figure}
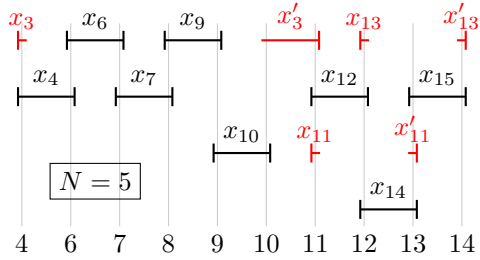

The algorithm scans the first intervals up to, and including, the first required interval. All these intervals become leaders.

The algorithm then continues scanning intervals one by one. Let $I$ be the interval that is currently scanned and $I_{p}$ be the last interval that was scanned. The \mydef{active} intervals are those that have already been scanned and intersect $I_{p}$. A \mydef{popular} leader is a leader that is either active or has at least one active follower.
\shortversion{\begin{itemize}[topsep=0pt, partopsep=0pt, itemsep=0pt]}
\longversion{\begin{itemize}}
 \item If $I$ is optional, then $I$ becomes a leader, the algorithm continues scanning intervals until scanning a required interval; all these intervals become leaders.
 \item If $I$ is required, then it becomes a follower of all popular leaders that do not intersect $I$ and that have no follower intersecting $I$. If all popular leaders have at least two followers, then set $N:=N-1$ and \textbf{merge} the second-last follower of each popular leader with the last follower of the corresponding leader; i.e., for every popular leader, the right endpoint of its second-last follower is set to the right endpoint of its last follower, and then the last follower of every popular leader is removed.
\end{itemize}

\noindent
After having scanned all the intervals, the algorithm exhaustively applies \longversion{the reduction rules}\shortversion{Rules} \RedIncl, \RedDom, and \RedUnit again.

In the example from Fig.~\ref{fig:beforeScanning}, variable $x_6$ is merged with $x_9$, and $x_7$ with $x_{10}$. \RedDom then removes the values $7$ and $8$, resulting in the instance depicted in Fig.~\ref{fig:kernel}.

\subsection{Correctness and Kernel Size}

Let $\mathcal{I}'=(X',D',\dom',N')$ be the instance resulting from applying one operation of the kernelization algorithm to an instance $\mathcal{I}=(X,D,\dom,N)$. An operation is an instruction which modifies the instance: \RedIncl, \RedDom, \RedUnit, and \textbf{merge}. 
We show that there exists a solution $S$ for $\mathcal{I}$ \myIff there exists a solution $S'$ for $\mathcal{I'}$. A solution is \emph{nice} if each of its elements is the right endpoint of some interval. Clearly, for every solution, a nice solution of the same size can be obtained by shifting each value to the next right endpoint of an interval. Thus, when we construct $S'$ from $S$ (or vice-versa), we may assume that $S$ is nice.

\longversion{Reduction }Rule \RedIncl is sound because a solution for $\mathcal{I}$ is a solution for $\mathcal{I'}$ and vice-versa, because any solution $\mathcal{I}'$ contains a value $v$ of $I\subseteq I'$, as $I$ is required.
\longversion{Reduction Rule }\RedDom is correct because if $v'\in S$, then $S':=(S\setminus \set{v'})\cup \set{v}$ is a solution for $\mathcal{I}'$ and for $\mathcal{I}$.
\longversion{Reduction Rule }\RedUnit is obviously correct ($S=S'\cup \dom(x)$).


After having applied these $3$ reduction rules, observe that the first interval is optional and contains only one value.
Suppose the algorithm has started scanning intervals.
By construction, the following properties apply to $\mathcal{I}'$.

\begin{figure} \centering
 \shortversion{\begin{tikzpicture}[xscale=0.6,yscale=0.65]}
 \longversion{\begin{tikzpicture}[xscale=0.65,yscale=0.75]}
  \tikzset{ivl/.style={thick},
  var/.style={midway,above=0.5pt}}
  \shortversion{\tikzset{ivlopt/.style={thick,black!60}}}
  \longversion{\tikzset{ivlopt/.style={thick,red}}}

  \draw (7,-0.6) node {$4$};
  \draw (8,-0.6) node {$6$};
  \foreach \x in {9,...,14} \draw (\x,-0.6) node {$\x$};
  \foreach \x in {7,...,14} \draw[thin,black!20] (\x,-0.3) -- (\x,3.3);
  
  \draw (8.5,0.5) node[rectangle,draw] {$N=4$};
 
  \draw[ivlopt,|-] (6.9,3)-- node[var] {$x_3$} (7.1,3);
  \draw[ivl,|-|] (7.9,3)-- node[var] {$x_6$} (9.1,3);
  \draw[ivlopt,-|] (9.9,3)-- node[var] {$x_3'$} (11.1,3);
  \draw[ivlopt,|-] (11.9,3)-- node[var] {$x_{13}$} (12.1,3);
  \draw[ivlopt,-|] (13.9,3)-- node[var] {$x_{13}'$} (14.1,3);
  \draw[ivl,|-|] (6.9,2)-- node[var] {$x_4$} (8.1,2);
  \draw[ivl,|-|] (8.9,2)-- node[var] {$x_7$} (10.1,2);
  \draw[ivl,|-|] (10.9,2)-- node[var] {$x_{12}$} (12.1,2);
  \draw[ivl,|-|] (12.9,2)-- node[var] {$x_{15}$} (14.1,2);
  \draw[ivlopt,|-] (10.9,1)-- node[var] {$x_{11}$} (11.1,1);
  \draw[ivlopt,-|] (12.9,1)-- node[var] {$x_{11}'$} (13.1,1);
  \draw[ivl,|-|] (11.9,0)-- node[var] {$x_{14}$} (13.1,0);
 \end{tikzpicture}
 \caption{\label{fig:kernel} Kernelized instance.}
\end{figure}
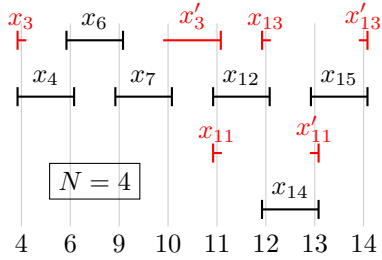

\begin{property}\label{obs:follower-not-intersect-leader}
 A follower does not intersect any of its leaders.
\end{property}

\begin{property}\label{obs:followers-no-intersect}
 If $I,I'$ are two (distinct) followers of the same leader, then $I$ and $I'$ do not intersect.
\end{property}

%

\noindent
Before proving the correctness of the \textbf{merge} operation, let us first show that the subset of leaders of a follower is not empty.
\begin{boldclaim}\label{cl:atleastoneleader}
 Every interval that has been scanned is either a leader or a follower of at least one leader.
\end{boldclaim}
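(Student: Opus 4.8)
The plan is to prove the claim by induction on the number of intervals processed during the scanning phase. For leaders there is nothing to prove, so let $I$ be a required interval that is processed in the follower branch. By construction $I$ is then made a follower of exactly the set $\mathcal{P}$ of popular leaders $L$ such that neither $L$ nor any follower of $L$ intersects $I$, so the claim for $I$ is precisely the statement that $\mathcal{P}\neq\emptyset$, which is what I would establish.

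First I would assemble the facts I have to work with. The interval $I_p$ scanned immediately before $I$ is required: every interval inside a leader run becomes a leader and a run terminates on a required interval, so $I_p$ is either a required leader that closed a run or a required follower. By the induction hypothesis $I_p$ is a leader --- hence active and so a popular leader --- or a follower of some leader $L$, which then has the active follower $I_p$ and is therefore popular; so at least one popular leader exists. Moreover, after the exhaustive pre-scan application of \RedIncl no required interval is contained in any other interval, so the required intervals, listed by increasing right endpoint, have strictly increasing left endpoints as well; in particular $\lep(I)>\lep(I_p)$. Also $I_p$ has not been stretched by a \textbf{merge} before $I$ is scanned, since a leader is never stretched and a freshly created follower is a last follower, not a second-last one.

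The core of the argument is then to exhibit a popular leader $L^\star$ such that $L^\star$ and every follower of $L^\star$ lie entirely to the left of $\lep(I)$; such an $L^\star$ belongs to $\mathcal{P}$. Simply taking the popular leader with smallest right endpoint need not work, because by Property~\ref{obs:follower-not-intersect-leader} together with the scanning order its followers sit strictly to its right and might still reach $I$. Instead one has to use the run structure, locating $L^\star$ among the leaders of the most recent run (or among the leaders of $I_p$ when $I_p$ is a follower) and bounding the right endpoints of its followers by the scan position reached when each was created. Here Property~\ref{obs:follower-not-intersect-leader} and Property~\ref{obs:followers-no-intersect} fix the relative order of $L^\star$, its followers, $I_p$ and $I$, while the pre-scan reduction rules \RedIncl, \RedDom and \RedUnit are what rule out the configurations in which $L^\star$ or one of its followers would otherwise touch $I$.

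The step I expect to be the main obstacle is the bookkeeping around the \textbf{merge} operation. A merge is applied to all popular leaders at once, rewriting each second-last follower's right endpoint and deleting each last follower, so after several merges the followers have moved and some leaders have lost followers. One must check that merging never pushes a follower of the chosen $L^\star$ to the right of $\lep(I)$ --- a merge only raises a follower's right endpoint to that of an already-scanned follower, whose right endpoint is bounded by the scan position reached so far --- and that merging never destroys the popular leader we rely on, since a deleted follower is always a last follower and its leaders retain the merged second-last one. Getting this interplay exactly right, in particular tracking which leaders remain popular after each merge, is the delicate part; the remainder follows directly from Properties~\ref{obs:follower-not-intersect-leader} and~\ref{obs:followers-no-intersect} together with the three reduction rules.
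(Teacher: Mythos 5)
Your setup is right --- reducing the claim to showing that the set $\mathcal{P}$ of popular leaders that neither intersect $I$ nor have a follower intersecting $I$ is nonempty, observing that $I_p$ is required, that some popular leader exists, and that $\lep(I)>\lep(I_p)$ by \RedIncl --- but the proof stops exactly where it has to do its work. You never exhibit the leader $L^\star$; you only say where one might look for it (``among the leaders of the most recent run, or among the leaders of $I_p$''), and that candidate does not obviously work: a leader $L$ of $I_p$ is popular and lies left of $I$, but $I_p$ itself may intersect $I$ (we only know $\lep(I_p)<\lep(I)$, not $\rep(I_p)<\lep(I)$), in which case $L$ has a follower meeting $I$ and $L\notin\mathcal{P}$. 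So the core existence step is missing, not merely ``delicate.''

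The ingredient you are missing is the one the paper's proof opens with: after \RedDom, every value of $D$ is the right endpoint (and the left endpoint) of some interval --- otherwise $\ivl(v)\subseteq\ivl(v')$ for a neighbouring value $v'$ and $v$ would have been discarded. Applied to the value $\lep(I_p)$, this yields an already-scanned interval $I_\ell$ with $\rep(I_\ell)=\lep(I_p)$. That interval is active (it meets $I_p$), lies strictly left of $I$, and is the correct witness: if $I_\ell$ is a leader, $I$ becomes its follower; otherwise $I$ becomes a follower of a leader of $I_\ell$. The residual verification that this leader has no follower meeting $I$ is exactly where your other observations pay off: such a follower would lie strictly right of $\lep(I_p)$ (Property~\ref{obs:follower-not-intersect-leader} or~\ref{obs:followers-no-intersect}) and have right endpoint at most $\rep(I_p)$ by the scan order, hence be a required interval properly contained in the required interval $I_p$, contradicting \RedIncl. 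Your concerns about \textbf{merge} are legitimate but are absorbed by the same containment argument, since a merged interval contains the second-last follower it was built from. Without the \RedDom endpoint fact, though, there is no concrete $L^\star$ and the induction does not close.
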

\begin{proof}
First, note that \RedDom ensures that each \longversion{domain }value in $D$ is the left endpoint of some interval and the right endpoint of some interval.
Let $I$ be the interval that is currently scanned and $I_p$ be the previously scanned interval. If $I_p$ or $I$ is optional, then $I$ becomes a leader. Suppose $I$ and $I_p$ are required. We have \longversion{that }$\lep(I)>\lep(I_p)$, otherwise $I$ would have been removed by \RedIncl. By \longversion{Rule }\RedDom, there is some interval $I_\ell$ with $\rep(I_\ell)=\lep(I_p)$. If $I_\ell$ is a leader, $I$ becomes a follower of $I_\ell$; otherwise $I$ becomes a follower of $I_\ell$'s leader.
\end{proof}

\noindent
The following two lemmas prove the correctness of the \textbf{merge} operation.
Recall that $\mathcal{I}'$ is an instance obtained from $\mathcal{I}$ by one application of the \textbf{merge} operation.

\begin{lemma}
If $S$ is a nice solution for $\mathcal{I}$, then there exists a solution $S'$ for $\mathcal{I}'$ with $S'\subseteq S$.
\end{lemma}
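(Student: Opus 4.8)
The plan is to reduce the statement to producing a single ``redundant'' value of $S$, and then to produce one. Write $\mathcal{P}$ for the set of popular leaders at the moment the \textbf{merge} is performed, and for $L\in\mathcal{P}$ let $a_L$ and $b_L$ denote its second-last and last follower; the operation replaces $a_L$ by $a_L^{+}:=[\lep(a_L),\rep(b_L)]$, deletes every variable whose unique interval is some $b_L$, and sets $N'=N-1$ and $D'=D$. Since every follower is a required interval, $b_L$ is the only interval of its variable, so the variable set of $\mathcal{I}'$ is exactly that of $\mathcal{I}$ minus the last-follower variables, and among the surviving variables the only domain that changes is $\dom'(x_{a_L})\supseteq\dom(x_{a_L})$ (here $\rep(a_L)<\rep(b_L)$ by Property~\ref{obs:followers-no-intersect} and the scan order). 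Consequently $S$ already meets the domain of every variable of $\mathcal{I}'$, and the only reason it need not be a solution of $\mathcal{I}'$ is its size. If $\Card{S}\le N-1$ take $S':=S$; otherwise $\Card{S}=N$ and it suffices to find $v^{\star}\in S$ such that $S\setminus\{v^{\star}\}$ still meets every domain of $\mathcal{I}'$ — then $S'$ can be taken as the set of values used by the corresponding instantiation, which is contained in $S\setminus\{v^{\star}\}$ and has at most $N-1=N'$ values.

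To choose $v^{\star}$, fix $L\in\mathcal{P}$. Because $b_L$ is required, $S$ contains a value $\beta_L$ with $\lep(b_L)\le\beta_L\le\rep(b_L)$, and $\beta_L\in\dom'(x_{a_L})$, so $x_{a_L}$ is supported in $\mathcal{I}'$ by $\beta_L\in S$. Similarly $S$ contains $\alpha_L$ with $\lep(a_L)\le\alpha_L\le\rep(a_L)$ supporting $x_{a_L}$ in $\mathcal{I}$, and $\alpha_L\neq\beta_L$ by Property~\ref{obs:followers-no-intersect}. Hence $\alpha_L$ is the natural candidate for $v^{\star}$: deleting it uncovers neither $x_{a_L}$ (now covered by $\beta_L$) nor any last-follower variable (deleted), so the only obstruction is that $\alpha_L$ could be the unique value of $S$ lying in the domain of some variable $y$ that survives in $\mathcal{I}'$.

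What remains — and this is the crux — is to show that for at least one $L\in\mathcal{P}$ no surviving variable has $\alpha_L$ as its unique $S$-support. For this one should combine Properties~\ref{obs:follower-not-intersect-leader} and~\ref{obs:followers-no-intersect}, Claim~\ref{cl:atleastoneleader}, the consequence of \RedDom that every value of $D$ is both a left endpoint and a right endpoint, the definitions of \emph{active} and \emph{popular}, and, crucially, the precondition that \emph{every} popular leader has at least two followers. The mechanism I would try: if a surviving $y$ had an interval $J\ni\alpha_L$ with $J\cap S=\{\alpha_L\}$, then $J$ meets $a_L$; using the scan order and Claim~\ref{cl:atleastoneleader} one locates $J$ (or a leader of $J$) relative to $L$ and $I$, distinguishing whether $I$, the required interval triggering the \textbf{merge}, became a follower of $L$ (so $b_L=I$) or not, and concludes that either $y$ is in fact covered by another value of $S$, or another candidate $\alpha_{L'}$ with $L'\in\mathcal{P}$ is not blocked; a counting argument over $\mathcal{P}$, together with niceness of $S$ and $\Card{S}=N$, then rules out the case where every candidate is blocked. (One also has to check that \textbf{merge} is well defined, i.e.\ that the $a_L$ are pairwise distinct and distinct from every $b_L$, which follows from the construction.) I expect this case analysis and counting, rather than the reduction of the first paragraph, to be the main obstacle.
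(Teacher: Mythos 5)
Your first paragraph is a correct and useful reduction: since every surviving domain of $\mathcal{I}'$ contains the corresponding domain of $\mathcal{I}$ and $D'=D$, the set $S$ already supports every variable of $\mathcal{I}'$, and the whole difficulty is to exhibit one value $v^\star\in S$ whose removal leaves every surviving variable supported. You have also correctly located the right family of candidates (values of $S$ lying in second-last followers of popular leaders). But the proof stops exactly where the real work begins: you never exhibit a concrete $v^\star$, and the third paragraph is an explicit plan rather than an argument ("the mechanism I would try", "I expect this \dots to be the main obstacle"). As written, nothing rules out the scenario you yourself raise, namely that for every popular leader $L$ the value $\alpha_L$ is the unique $S$-support of some surviving interval. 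That is a genuine gap, not a routine detail: a generic $\alpha_L$ does \emph{not} work, and the counting argument over $\mathcal{P}$ you gesture at is not how the difficulty is resolved.

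The missing idea is an extremal choice. The paper takes $I^-$ to be the second-last follower with the \emph{largest} right endpoint, sets $t_2$ to be the \emph{largest} value of $S$ in $I^-$ and $t_1$ the \emph{smallest} value of $S$ in the currently scanned interval $I$, and then proves that $S$ contains no value strictly between $t_2$ and $t_1$ (this uses niceness of $S$, the fact that every last follower reaches $\lep(I)$, and Property~\ref{obs:followers-no-intersect}). From this "empty gap" it follows that every interval containing $t_2$ must be a last or second-last follower of a popular leader: a leader or an earlier follower through $t_2$ would force a whole follower into the interval $(t_2,t_1)$ and hence an unsupported required interval. Each such interval is then either deleted by the merge, or its merged version is supported by $t_1$ (or by an $S$-value left of $t_2$), so $S\setminus\{t_2\}$ works. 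Without singling out this particular $t_2$ via the double maximality, the local analysis around an arbitrary $\alpha_L$ does not close, so the proposal cannot be completed along the lines sketched without effectively rediscovering this choice.
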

\begin{proof}
Consider the step where the kernelization algorithm applies the \textbf{merge} operation. At that step, each popular leader has at least two followers and the algorithm merges the last two followers of each popular leader and decrements $N$ by one. The currently scanned interval is $I$. Let $F_2$ denote the set of \longversion{all }intervals that are the second-last follower of a popular leader, and $F_1$ the set of \longversion{all }intervals that are the last follower of a popular leader before merging. Let $M$ denote the set of merged intervals. Clearly, every interval of $F_1\cup F_2\cup M$ is required as all followers are required.

\begin{boldclaim}\label{cl:lastfollower-commonvalue}
Every interval in $F_1$ intersects $\lep(I)$.
\end{boldclaim}
\begin{proof}
Let $I_1\in F_1$.
By construction, $\rep(I_1)\in I$, as $I$ becomes a follower of every popular leader that has no follower intersecting $I$, and no follower has a right endpoint larger than $\rep(I)$. Moreover, $\lep(I_1)\le \lep(I)$ as no follower is a strict subset of $I$ by \RedIncl and the fact that all followers are required.
\end{proof}

\noindent
Let $I^-$ be the interval of $F_2$ with the largest right endpoint. Let $L$ be a leader of $I^-$. By construction and \RedIncl, $L$ is a leader of $I$\longversion{ as well} and is thus popular. Let $t_1\in S\cap I$ be the smallest value of $S$ that intersects $I$ and let $t_2 \in S \cap I^-$ be the largest value of $S$ that intersects $I^-$. By Property \ref{obs:followers-no-intersect}, $t_2<t_1$.

\begin{boldclaim}\label{cl:no-intermediate-value}
 $S$ contains no value $t_0$ such that $t_2<t_0<t_1$.
\end{boldclaim}
\begin{proof}
Suppose $S$ contained such a value $t_0$. As $S$ is nice, $t_0$ is the right endpoint of some interval $I_0$. As $t_2$ is the rightmost value intersecting $S$ and any interval in $F_2$, $I_0$ is not in $F_2$. As $I_0$ has already been scanned, and was scanned after every interval in $F_2$, $I_0$ is in $F_1$. However, by Claim \ref{cl:lastfollower-commonvalue}, $I_0$ intersects $\lep(I)$. As no scanned interval has a larger right endpoint than $I$, $t_0 \in S\cap I$, which contradicts the fact that $t_1$ is the smallest value in $S\cap I$ and that $t_0<t_1$.
\end{proof}

\begin{boldclaim}\label{cl:t1intersectsmerged}
 Suppose $I_1\in F_1$ and $I_2\in F_2$ are the last and second-last follower of a popular leader $L'$, respectively. Let $M_{12}\in M$ denote the interval obtained from merging $I_2$ with $I_1$. If $t_2\in I_2$, then $t_1\in M_{12}$.
\end{boldclaim}
\begin{proof}
\longversion{For the sake of contradiction, assume}\shortversion{Assume otherwise that} $t_2 \in I_2$, but $t_1 \notin M_{12}$. As $t_2 < t_1$, we have\longversion{ that} $t_1 > \rep(M_{12}) = \rep(I_1)$. But then $S$ is not a solution as \shortversion{$t_2<\lep(I_1)$ and }$S\cap I_1 = \emptyset$ by Claim \ref{cl:no-intermediate-value}\longversion{ and the fact that $t_2<\lep(I_1)$}.
\end{proof}

\begin{boldclaim}\label{cl:inter-t2}
If $I'$ is an interval with $t_2\in I'$, then $I'\in F_2\cup F_1$.
\end{boldclaim}
\begin{proof}
First, suppose $I'$ is a leader. As every leader has at least two followers when $I$ is scanned, $I'$ has two followers whose left endpoint is larger than $\rep(I')\ge t_2$ (by Property \ref{obs:follower-not-intersect-leader}) and smaller than $\lep(I)\le t_1$ (by \RedIncl). Thus, at least one of them is included in the interval $(t_2,t_1)$ by Property \ref{obs:followers-no-intersect}, which contradicts $S$ being a solution by Claim \ref{cl:no-intermediate-value}.

Similarly, if $I'$ is a follower of a popular leader, but not among the last two followers of any popular leader, Claim \ref{cl:no-intermediate-value} leads to a contradiction as well.

Finally, if $I'$ is a follower, but has no popular leader, then it is to the left of \longversion{some}\shortversion{a} popular leader, and thus to the left of $t_2$.
\end{proof}

\noindent
Consider the set $T_2$ of intervals that intersect $t_2$. By Claim \ref{cl:inter-t2}, $T_2 \subseteq F_2\cup F_1$. For every interval $I'\in T_2\cap F_2$, the corresponding merged interval of $\mathcal{I}'$ intersects $t_1$ by Claim \ref{cl:t1intersectsmerged}. For every interval $I'\in T_2\cap F_1$, and every interval $I''\in F_2$ with which $I'$ is merged, $S$ contains some value $x\in I''$ with $x<t_2$. Thus, $S':=S\setminus \{t_2\}$ is a solution for $\mathcal{I}'$.
\end{proof}

\begin{lemma}
If $S'$ is a nice solution for $\mathcal{I}'$, then there exists a solution $S$ for $\mathcal{I}$ with $S'\subseteq S$.
\end{lemma}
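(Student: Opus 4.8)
The plan is to reverse the argument of the preceding lemma. First I make the effect of the \textbf{merge} step explicit: writing $I$ for the interval being scanned when it is applied and, for a popular leader $L'$, $f_1(L')$, $f_2(L')$ for its last and second-last follower, passing from $\mathcal{I}$ to $\mathcal{I}'$ deletes the variable of every interval in $F_1 = \SB f_1(L') \SM L' \text{ popular} \SE$ (this deletes $I$'s variable exactly when $I \in F_1$), replaces the domain of the variable of each $f_2(L') \in F_2$ by the merged interval $M(L') = [\lep(f_2(L')), \rep(f_1(L'))] \supseteq f_2(L')$, leaves every other variable and domain untouched, and sets $N' = N - 1$. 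By Claim~\ref{cl:lastfollower-commonvalue} every interval of $F_1$ contains $\lep(I)$, and so does $I$; moreover every $M(L')$ contains $\lep(I)$, since $\rep(f_2(L')) < \lep(f_1(L')) \le \lep(I) \le \rep(f_1(L'))$.

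Given a nice solution $S'$ of $\mathcal{I}'$ with $|S'| \le N' = N-1$, I would build $S$ by adding to $S'$ one carefully chosen value; then automatically $S' \subseteq S$ and (as long as only one value is added) $|S| \le N$, so it remains to choose this value so that $S$ meets every domain of $\mathcal{I}$. Every variable common to $\mathcal{I}$ and $\mathcal{I}'$ with unchanged domain is already met by $S'$; every deleted variable, and $I$'s variable if it was deleted, has a domain containing $\lep(I)$; so the only variables that can fail to be covered are those whose domain shrank from $M(L')$ to $f_2(L')$. Call $L'$ \emph{bad} if $S' \cap M(L') \ne \emptyset$ but $S' \cap f_2(L') = \emptyset$, i.e.\ $S'$ reaches $M(L')$ only inside its extension $(\rep(f_2(L')), \rep(f_1(L'))]$. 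If there is no bad leader, then $S := S' \cup \SB \lep(I) \SE$ already works, has size at most $N$, and contains $S'$.

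The whole difficulty, and the step I expect to be the main obstacle, is controlling the bad leaders; this is the reverse-direction counterpart of Claims~\ref{cl:no-intermediate-value}--\ref{cl:inter-t2}. Concretely I would proceed as follows. For a bad $L'$, take the least $w \in S' \cap M(L')$; by niceness $w = \rep(J_w)$ for an interval $J_w$ of $\mathcal{I}'$ with $\rep(f_2(L')) < w \le \rep(f_1(L'))$. A merged interval has right endpoint at least $\lep(I)$ (Claim~\ref{cl:lastfollower-commonvalue}), so if $w < \lep(I)$ then $J_w$ is an \emph{unmodified} interval of $\mathcal{I}$ whose right endpoint lies in the window $(\rep(f_2(L')), \lep(I))$ just past $f_2(L')$ in the scan order; I would rule this out using \RedIncl (no required interval is a strict subset of another), \RedDom (every value is both a left and a right endpoint of some interval), Properties~\ref{obs:follower-not-intersect-leader} and~\ref{obs:followers-no-intersect}, and the order in which the intervals were scanned, exactly as Claim~\ref{cl:no-intermediate-value} excludes $S$-values in $(t_2,t_1)$. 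Hence $w \ge \lep(I)$, so $w \in I$ and $w \in f_1(L')$: the witness $w \in S'$ of each bad leader already covers $I$ and that leader's deleted interval $f_1(L')$ in $\mathcal{I}$. After accounting for this the only possibly uncovered domains are the $f_2(L')$ of bad leaders, together with the $f_1(L'')$ of popular leaders $L''$ whose domain is not met by $S'$ (all of which contain $\lep(I)$). The final and most technical point — where essentially all the remaining work lies — is to show that a single extra value $u$ can be chosen that lies in $f_2(L')$ for every bad $L'$ and simultaneously in every still-uncovered interval of $F_1$ (and in $I$ when needed), so that $S := S' \cup \SB u \SE$ meets every domain of $\mathcal{I}$ with $|S| \le N' + 1 = N$ and $S' \subseteq S$. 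Proving the existence of such a $u$ reduces, just as Claim~\ref{cl:inter-t2} does in the forward direction, to showing that a nice solution of $\mathcal{I}'$ cannot ``miss'' the original second-last followers of two popular leaders whose domains lie too far apart: their relative positions are tightly constrained by \RedDom, \RedIncl, and the non-intersection properties of followers, and pinning this down is the heart of the argument.
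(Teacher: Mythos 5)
You have set the proof up along the same lines as the paper --- correctly describing the effect of \textbf{merge}, observing that exactly one value must be added to $S'$, and isolating the real difficulty, namely that the added value must simultaneously hit the original second-last follower $f_2(L')$ of every ``bad'' popular leader and every still-uncovered last follower. But you then stop exactly there: both the claim that the witness $w$ of a bad leader satisfies $w\ge\lep(I)$ and, more importantly, the existence of the single value $u$ are announced (``I would rule this out using\dots'', ``the final and most technical point\dots is to show\dots'') rather than proved. Since you yourself identify this as ``the heart of the argument'', the proposal is a plan, not a proof; as it stands the lemma is not established.

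For comparison, the paper closes this gap constructively rather than by an abstract ``common value'' argument. It first notes that $S'$ must contain the right endpoint of some merged interval (the variable of the scanned interval $I$ must be covered, every interval of $\mathcal{I}'$ with right endpoint in $I$ is merged, and $S'$ is nice). Taking $t_1$ to be the \emph{smallest} such value, $I_1$ the original last follower with $\rep(I_1)=t_1$, and $I_2$ the second-last follower of \emph{smallest} right endpoint among leaders whose last follower is $I_1$, it sets $t_2:=\rep(I_2)$ and proves the single claim that whenever $t_1$ lies in a merged interval $M_{12}'$ obtained from a pair $(I_2',I_1')$, then $t_2\in I_2'$; the proof is a short case analysis (either $I_2'\subseteq(t_2,\lep(I_1'))$, contradicting that $I_1$ is the last follower of $L$, or $\rep(I_2')<t_2$, forcing $I_1=I_1'$ and contradicting the minimality of $t_2$) using \RedIncl and the way followers are assigned. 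That explicit choice of $t_2$ is precisely the ingredient your proposal is missing; without it (or an equivalent argument that the relevant $f_2(L')$'s pairwise intersect and also meet the uncovered $F_1$ intervals), the lemma remains unproven.
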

\begin{proof}
As in the previous proof, consider the step where the kernelization algorithm applies the \textbf{merge} operation. The currently scanned interval is $I$. Let $F_2$ and $F_1$ denote the set of \longversion{all }intervals that are the second-last and last follower of a popular leader before merging, respectively. Let $M$ denote the set of merged intervals. 

By Claim \ref{cl:lastfollower-commonvalue}\longversion{ from the previous proof}, every interval of $M$ intersects $\lep(I)$. On the other hand, every interval of $\mathcal{I}'$ whose right endpoint intersects $I$ is in $M$, by construction. Thus, $S'$ contains the right endpoint of some interval of $M$. Let $t_1$ denote the smallest such value, and let $I_1$ denote the interval of $\mathcal{I}$ with $\rep(I_1)=t_1$ (due to \RedIncl, there is a unique such interval). Let $I_2$ denote the interval of $\mathcal{I}$ with the smallest right endpoint such that there is a leader $L$ whose second-last follower is $I_2$ and whose last follower is $I_1$, and let $t_2:=\rep(I_2)$.

\begin{boldclaim}
Let $I_1'\in F_1$ and $I_2'\in F_2$ be two intervals \longversion{from $\mathcal{I}$ }that are merged into one interval $M_{12}'$\longversion{ of $\mathcal{I}'$}. If $t_1\in M_{12}'$, then $t_2\in I_2'$.
\end{boldclaim}
\begin{proof}
Suppose $t_1 \in M_{12}'$ but $t_2 \notin I_2'$. 
\longversion{We consider two cases. In the fist case,}\shortversion{If} $I_2' \subseteq (t_2,\lep(I_1'))$\longversion{. But then,}\shortversion{, then} $I_2'$ would have become a follower of $L$, which contradicts that $I_1$ is the last follower of $L$. \longversion{In the second case}\shortversion{Otherwise}, $\rep(I_2')<t_2$. But then, $I_1$ is a follower of the same leader as $I_1'$, as $\lep(I_1)\le \lep(I_1')$, and thus $I_1=I_1'$. By definition of $I_2$, however, $t_2=\rep(I_2)\le \rep(I_2')$, a contradiction.
\end{proof}

\noindent
By the previous claim, a solution $S$ for $\mathcal{I}$ is obtained from a solution $S'$ for $\mathcal{I}'$ by setting $S:=S'\cup \{t_2\}$.
\end{proof}

\noindent
\longversion{After having scanned all the intervals, Reduction Rules \RedIncl, \RedDom, and \RedUnit are applied again, and we have already proved their correctness.

\smallskip
}%
Thus, the kernelization algorithm returns an equivalent instance. To bound the kernel size by a polynomial in $k$, let $\mathcal{I}^*=(V^*,D^*,\dom^*,N^*)$ be the instance resulting from applying the \longversion{kernelization }algorithm to an instance $\mathcal{I}=(V,D,\dom,N)$.

\begin{property}\label{obs:nboptint}
 $\mathcal{I}$ and $\mathcal{I}^*$ have at most $2k$ optional intervals.
\end{property}

\noindent
Property \ref{obs:nboptint} holds for $\mathcal{I}$ as every optional interval is adjacent to at least one hole and each hole is adjacent to two optional intervals. It holds for $\mathcal{I}^*$ as the \longversion{kernelization }algorithm introduces no holes.

\begin{lemma}\label{lem:nbleaders}
 $\mathcal{I}^*$ has at most $4k$ leaders.
\end{lemma}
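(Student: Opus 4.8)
The plan is to charge every leader to an optional interval, losing at most a factor two, and then invoke Property~\ref{obs:nboptint}. To this end the first step is to read off the \emph{block structure} of the scanning phase. The scan proceeds in two alternating modes. It starts in what one may call \emph{leader mode}: every interval scanned is made a leader, and this mode is left precisely when a required interval is scanned --- that interval being the last leader produced in the current run. Between two such runs the scan is in \emph{follower mode}: a scanned required interval is turned into a follower (possibly triggering a \textbf{merge}), while a scanned optional interval is turned into a leader and re-enters leader mode. Consequently the leaders are partitioned into consecutive \emph{blocks}: the initial block consists of the intervals scanned up to and including the first required interval, and each later block starts with the optional interval that re-triggered leader mode and ends with the next required interval (the final block may be truncated if the scan terminates before a required interval is reached). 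Every block therefore contains at most one required interval, namely the one closing it; and, since after the exhaustive application of \RedIncl, \RedDom, \RedUnit the first scanned interval is optional and every non-initial block is opened by an optional interval, every block contains at least one optional interval.

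The second step is a double count. Optional intervals never become followers (followers are always required), so every scanned optional interval becomes a leader and lies in exactly one block; moreover, since the algorithm never creates new holes (the reasoning behind Property~\ref{obs:nboptint}), at most $2k$ optional intervals exist at any point of the run. As the blocks partition the leaders, use pairwise disjoint sets of intervals, and each uses at least one optional interval, there are at most $2k$ blocks, hence at most $2k$ required leaders (at most one per block). Summing, the scan creates at most $2k+2k=4k$ leaders; the concluding round of \RedIncl, \RedDom, \RedUnit only removes intervals, so $\mathcal{I}^*$ has at most $4k$ leaders.

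The only point that needs care is the bookkeeping of the two modes: one has to check that a required interval is declared a leader \emph{only} as the interval closing a block (never while the scan is in follower mode) and that the optional interval which reopens leader mode is counted inside the block it opens, so that the ``at least one optional interval per block'' charge is legitimate. Once this is settled there is no genuine obstacle, and the bound $4k$ follows directly from Property~\ref{obs:nboptint}.
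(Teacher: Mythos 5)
Your proof is correct and follows essentially the same route as the paper: both arguments observe that leaders are created in runs each opened by an optional interval and closed by (at most) one required interval, and then charge the at most $2k$ required leaders and at most $2k$ optional leaders to the at most $2k$ optional intervals guaranteed by Property~\ref{obs:nboptint}. Your block decomposition just makes the paper's charging ("for every optional interval, there are at most $2$ leaders") explicit.
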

\begin{proof}
\shortversion{For every required interval that becomes a leader, an optional interval also becomes a leader. As every interval is scanned only once, the number of leaders is at most $4k$ by Property \ref{obs:nboptint}.}%
\longversion{Consider the unique step of the algorithm that creates leaders. An optional interval is scanned, the algorithm continues scanning intervals until scanning a required interval, and all these scanned intervals become leaders. As every interval is scanned only once, for every optional interval, there are at most $2$ leaders. By Property \ref{obs:nboptint}, the number of leaders is thus at most $4k$.}
\end{proof}

\shortversion{The following lemma can be proved by analyzing how many new followers a leader can get in a period where no optional interval is scanned. The analysis considers different
cases based on the popular leader with the rightmost right endpoint, and is omitted due to space constraints.}

\begin{lemma}\label{lem:nbfollowers}
 Every leader has at most $4k$ followers.
\end{lemma}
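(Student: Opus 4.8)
The plan is to bound the number of followers a fixed leader $L$ can accumulate, by examining the scanning process in the intervals of time between consecutive ``optional events'' (scanning of an optional interval). Between two optional events the algorithm only scans required intervals, and each such required interval either becomes a new follower of $L$ (if $L$ is popular at that moment, $L$ does not intersect $I$, and $L$ has no follower intersecting $I$) or triggers a \textbf{merge} that removes the last follower of each popular leader. First I would observe that once a \textbf{merge} fires, the follower count of every popular leader drops by one, so in any maximal run of required intervals without a merge the count of followers of $L$ can grow by at most the length of that run; the work is therefore to bound how long such a run can be while $L$ stays popular and keeps gaining followers, and to bound how many such runs there are.

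The key structural facts I would use are Properties~\ref{obs:follower-not-intersect-leader} and \ref{obs:followers-no-intersect} (followers of $L$ are pairwise disjoint and disjoint from $L$) together with the consequence of \RedDom established in Claim~\ref{cl:atleastoneleader} that every value of $D$ is both a left endpoint and a right endpoint of some interval. Since followers of $L$ are pairwise disjoint required intervals lying to the right of $L$, between any two consecutive followers of $L$ there is a ``gap'', and by the no-strict-subset property (\RedIncl) each newly added follower of $L$ is forced to reach leftward past the left endpoint of the currently scanned interval. I would make precise the intuition that $L$ can only keep being handed new followers as long as the ``frontier'' of active intervals keeps moving right, and that this frontier is driven by the popular leader with the rightmost right endpoint; once that rightmost popular leader changes, or once $L$ ceases to be active and has no active follower, $L$ stops gaining followers until the next optional event ``resets'' the configuration. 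This is the case analysis the short version defers: split on which popular leader currently has the rightmost right endpoint, and argue that in each case either a merge must fire (bounding the run) or an optional interval must be scanned (bounding the number of runs). Combined with Lemma~\ref{lem:nbleaders}, which gives at most $4k$ leaders hence at most $4k$ optional events, this yields the $O(k)$ bound; tracking the constants through the case analysis gives the stated $4k$.

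The main obstacle I expect is the bookkeeping in the case where $L$ itself is the rightmost popular leader versus the case where some follower of $L$ carries the ``popular'' status while $L$ lies far to the left: in the latter situation one must argue that the chain of followers cannot lengthen indefinitely without a merge, using that the followers are disjoint and each covers at least one fresh right endpoint, so that the scanned intervals' right endpoints strictly increase and eventually the ``all popular leaders have at least two followers'' condition is met, forcing the merge. Making the amortization tight — showing the growth between merges is genuinely bounded by a constant times the number of optional events rather than by the number of values — is the delicate point, and I would handle it by charging each new follower of $L$ either to a distinct optional interval scanned since the last merge or to the merge that will consume it.
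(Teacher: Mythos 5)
Your high-level strategy is the same as the paper's: for a fixed leader $L$, count the followers it gains in each ``period'' between consecutive scans of optional intervals, and drive the case analysis by the popular leader $L_r$ with the rightmost right endpoint. But the proposal stops exactly where the proof begins. The entire content of the argument is a trichotomy you never establish: at any step where a scanned required interval $I$ becomes a follower \emph{without} a merge firing, $L_r$ must have at most one follower (otherwise every popular leader would have at least two followers and the merge would fire), so $L_r$ either (a)~has no follower and intersects $I$, (b)~has no follower and does not intersect $I$, or (c)~has one follower and intersects $I$; one then argues, using Property~\ref{obs:followers-no-intersect} (no two followers of $L$ can intersect $I$) and the fact that after situation~(b) the interval $I$ becomes a follower of $L_r$ itself, that these situations can hand $L$ at most \emph{two} new followers in any one period. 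Your sketch instead leans on the idea that ``once a merge fires the follower count drops, so the growth is bounded by the length of the run between merges'' --- but that bounds nothing, since a run without a merge can be arbitrarily long a priori, and the paper's counting does not use merges to cancel followers at all: it simply counts only followers added in non-merge steps (a follower added in a merge step is immediately merged away, so these never increase the final count).

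Your bookkeeping of the constants is also off. The number of periods is bounded by the number of optional intervals, which is at most $2k$ by Property~\ref{obs:nboptint}; Lemma~\ref{lem:nbleaders} (at most $4k$ leaders) plays no role here, and ``at most $4k$ optional events'' times a per-event constant does not reproduce the stated bound unless that constant is $1$, which the case analysis does not deliver. The correct arithmetic is $2$ followers per period times $2k$ periods $=4k$. So while the decomposition into periods and the focus on the rightmost popular leader are the right ingredients, the proposal has a genuine gap: the per-period bound of $2$, which is the lemma's only nontrivial claim, is deferred to an unspecified case analysis whose obstacles you acknowledge rather than resolve.
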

\longversion{\begin{proof}
Consider all steps where a newly scanned interval becomes a follower, but is not merged\longversion{ with another interval}. In each of these steps, the popular leader $L_r$ with the rightmost right endpoint either
\begin{itemize}
\item[(a)] has no follower and intersects $I$, or
\item[(b)] has no follower and does not intersect $I$, or
\item[(c)] has one follower and intersects $I$.
\end{itemize}
Now, let $L$ be some leader and let us consider a period where no optional interval is scanned. Let us bound the number of intervals that become followers of $L$ during this period without being merged with another interval. If the number of followers of $L$ increases in Situation (a), it does not increase in Situation (a) again during this period, as no other follower of $L$ may intersect $I$. After Situation (b) occurs, Situation (b) does not occur again during this period, as $I$ becomes a follower of $L_r$. Moreover, the number of followers of $L$ does not increase during this period in Situation (c) after Situation (b) has occurred, as no other follower of $L$ may intersect $I$. After Situation (c) occurs, the number of followers of $L$ does not increase in Situation (c) again during this period, as no other follower of $L$ may intersect $I$. Thus, at most $2$ followers are added to $L$ in each period. As the first scanned interval is optional, Property \ref{obs:nboptint} bounds the number of periods by $2k$. Thus, $L$ has at most $4k$ followers.
\end{proof}}

\noindent
\longversion{As, by }\shortversion{By }Claim \ref{cl:atleastoneleader}\longversion{, every interval of $\mathcal{I}^*$ is either a leader or a follower of at least one leader,}\shortversion{ and} Lemmas \ref{lem:nbleaders} and \ref{lem:nbfollowers}\longversion{ imply that}\shortversion{,} $\mathcal{I}^*$ has $O(k^2)$ intervals\longversion{, and thus $|X^*|=O(k^2)$}. \longversion{Because of Reduction Rule}\shortversion{By} \RedDom, every value in $D^*$ is the right endpoint and the left endpoint of some interval\longversion{, and thus, $|D^*|=O(k^2)$}.\shortversion{ Thus, $|X^*|+|D^*|=O(k^2)$.}
\longversion{
\smallskip

Using a counting sort algorithm with satellite data (see, e.g., \cite{CormenLeisersonRivestStein09}), the initial sorting of the $n+k$ intervals can be done in time $O(n+|D|+k)$. To facilitate the application of \RedIncl, counting sort is actually used twice to also sort by increasing left endpoint the sets of intervals with coinciding right endpoint.
An optimized implementation applies \RedIncl, \RedDom and \RedUnit simultaneously in one pass through the intervals, as one rule might trigger the other. To guarantee a linear running time for the scan-and-merge phase of the algorithm, only the first follower of a leader stores a pointer to the leader; all other followers store a pointer to the previous follower.
Due to space limitations, we omit the formal details about the implementation and running time analysis of the kernelization algorithm.
}We \shortversion{omit the running time analysis and} arrive at our main theorem.

\begin{theorem}\label{thm:kernel}
  The Consistency problem for \AMNV constraints,
  parameterized by the number $k$ of holes, admits a linear time
  reduction  to a problem kernel with $O(k^2)$ variables and  $O(k^2)$ domain
  values.
\end{theorem}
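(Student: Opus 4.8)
The plan is to obtain the theorem by chaining together the three ingredients already assembled above: soundness of every operation performed by the kernelization algorithm, a polynomial bound on the number of intervals and values of the reduced instance, and a linear-time implementation. None of these requires a new idea; the work is in combining them cleanly.

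First I would settle \emph{equivalence}. The algorithm is a finite sequence of operations, each one an application of \RedIncl, \RedDom, \RedUnit, or \textbf{merge}. Each reduction rule was shown to produce a consistency-equivalent instance, and the two lemmas preceding the size analysis show that a single \textbf{merge} step preserves the existence of a nice solution in both directions (with the solution sets nested). Since consistency-equivalence is transitive, the input $\mathcal{I}$ is consistent if and only if the output $\mathcal{I}^{*}$ is; moreover every operation adjusts $N$ and the set of already-selected values consistently, so the two \AMNV constraints are genuinely equivalent. As the algorithm never creates a new hole, the parameter of $\mathcal{I}^{*}$ is at most $k$, so the output qualifies as a kernel once its size is bounded.

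Second I would establish the \emph{size bound}. By Claim~\ref{cl:atleastoneleader}, every interval surviving the scanning phase is either a leader or a follower of at least one leader. Lemma~\ref{lem:nbleaders} bounds the number of leaders by $4k$, and Lemma~\ref{lem:nbfollowers} bounds the number of followers of any fixed leader by $4k$, so $\mathcal{I}^{*}$ has at most $4k + 4k\cdot 4k = O(k^2)$ intervals; since each variable owns at least one interval this gives $O(k^2)$ variables. After the final exhaustive application of \RedDom, every value of the reduced domain is simultaneously a left endpoint and a right endpoint of some interval, so the number of domain values is also $O(k^2)$. (Both Lemma~\ref{lem:nbleaders} and Property~\ref{obs:nboptint} feed into this, and I would double-check the case analysis behind Lemma~\ref{lem:nbfollowers}, since that is where the quadratic size really comes from.)

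Third, for the \emph{running time}, I would argue that the initial sort of the $n+k$ intervals by right endpoint — and, within ties, by left endpoint, to make \RedIncl easy to apply — can be done with counting sort on the $O(|D|)$ endpoint values in time $O(n+|D|+k)=O(|\mathcal{I}|)$, that \RedIncl/\RedDom/\RedUnit (which can trigger one another) are applied together in one left-to-right sweep, and that the scan-and-merge phase is linear provided it is implemented with care. The main obstacle is precisely this last point: a naive \textbf{merge} that touches every active follower of every popular leader could cost $\Theta(k^2)$ or more over the whole run, so one must maintain the active intervals and popular leaders incrementally and represent each leader's follower list so that only $O(1)$ pointers change per popular leader per merge — e.g.\ only the first follower of a leader points to the leader, every later follower points to its predecessor — charging $O(1)$ amortized work to each interval. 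A final exhaustive pass of the reduction rules again costs $O(|\mathcal{I}|)$. Granting the linear-time implementation (whose routine but fiddly details I would defer) and the size bound above, the theorem follows.
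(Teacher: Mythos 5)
Your proposal is correct and follows essentially the same route as the paper: equivalence via the soundness of \RedIncl, \RedDom, \RedUnit and the two \textbf{merge} lemmas; the $O(k^2)$ size bound via Claim~\ref{cl:atleastoneleader} together with Lemmas~\ref{lem:nbleaders} and~\ref{lem:nbfollowers} and the endpoint property enforced by \RedDom; and linear time via counting sort, a combined pass of the reduction rules, and the predecessor-pointer representation of follower lists. The paper likewise defers the detailed implementation analysis, so nothing essential is missing from your argument.
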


\noindent
Using the succinct description of the domains, the size of the kernel can be bounded by $O(k^2)$.

\longversion{\smallskip

\noindent
\emph{Remark:} Denoting $\textsf{var}(v)=\set{x\in X: v\in \dom(x)}$, Rule \RedDom can be generalized to discard any $v'\in D$ for which there exists a $v\in D$ such that $\textsf{var}(v')\subseteq \textsf{var}(v)$ at the expense of a higher running time.
}

\subsection{Improved FPT Algorithm and \HAC}
\label{subsec:FPTalgo}

Using the kernel from Theorem \ref{thm:kernel} and the simple algorithm described in the beginning of this section, one arrives at a $O(2^k k^2 + |\mathcal{I}|)$ time algorithm for checking the consistency of an \AMNV constraint. Borrowing ideas from the kernelization algorithm, we now reduce the exponential dependency on $k$ in the running time.
\longversion{The speed-ups due to this branching algorithm and the kernelization algorithm lead to a speed-up for enforcing \HAC for \AMNV constraints (by Corollary \ref{cor:HAC}) and for enforcing \HAC for \NV constraints (by the decomposition of \cite{BessiereEtal06}).}

\begin{theorem}\label{thm:FPTalgo}
The Consistency problem for \AMNV constraints admits a $O(\rho^k k^2+|\mathcal{I}|)$ time algorithm, where $k$ is the number of holes in the domains of the input instance $\mathcal{I}$, and $\rho=\frac{1+\sqrt{5}}{2}<1.6181$.
\end{theorem}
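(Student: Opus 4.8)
The plan is to first invoke Theorem~\ref{thm:kernel} to reduce, in time $O(|\mathcal{I}|)$, to an equivalent instance of size $O(k^2)$ with at most $k$ holes, and then to decide its consistency by a bounded search tree whose recursion tree has $O(\rho^k)$ nodes, spending $O(k^2)$ time per node. Throughout I keep the intervals sorted by right endpoint (this order is produced by the kernelization and is cheap to maintain) and work with \emph{nice} solutions, so that a solution is a set of at most $N$ right endpoints meeting every domain. The base case is $k=0$: then every domain is an interval and consistency is decided in $O(k^2)$ by the greedy algorithm of Beldiceanu mentioned above. In the recursive step I first apply the reduction rules \RedIncl, \RedDom and \RedUnit exhaustively — this keeps the size $O(k^2)$ and can only help the analysis, since \RedUnit may cascade — and then branch.

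For the branching I pick the hole-variable $x$ whose leftmost interval $I_1$ has the smallest right endpoint, and branch on whether the value chosen for $x$ lies in $I_1$ or strictly to its right. In the ``right'' branch I delete $I_1$ from $\dom(x)$; this destroys exactly one hole of $x$, so $k$ drops by one. In the ``$I_1$'' branch I set $\dom(x):=I_1$, which destroys all $\nbholes(x)$ holes of $x$. If $x$ has at least three intervals this already drops $k$ by at least two, and re-applying the reduction rules can only remove further holes, so we obtain the recurrence $T(k)\le T(k-1)+T(k-2)+O(k^2)$; since $\rho$ is the positive root of $x^2=x+1$ this solves to $T(k)=O(\rho^k k^2)$. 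The two branches are exhaustive because, for a nice solution, the value taken by $x$ lies in one of the intervals of $\dom(x)$, and soundness of the restriction/deletion follows from the soundness of the reduction rules established above.

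The delicate case — and what I expect to be the main obstacle — is when \emph{every} hole-variable has exactly two intervals, so ``commit $x$ to $I_1$'' removes only one hole and the naive estimate degrades to $2^k$. Here I exploit the structure guaranteed after the reduction rules and the left-to-right scan: after \RedDom and \RedUnit the leftmost interval is a singleton $\{v\}$, so I branch instead on whether $v$ belongs to the solution. Discarding $v$ removes it from all domains (creating no new hole, as observed for Rule \RedDom) and kills the single hole of $x$; selecting $v$ deletes $x$ together with every variable whose domain contains $v$. The key step is to show that, after re-applying the reduction rules, one of these two branches removes a \emph{second} hole: either some other hole-variable also contains $v$, so ``select $v$'' drops $k$ by at least two; or, using that after \RedDom every surviving value is simultaneously a left and a right endpoint together with the leader/follower structure of the scan (Claim~\ref{cl:atleastoneleader} and Property~\ref{obs:nboptint}), ``discard $v$'' provokes a cascade of \RedUnit/\RedDom that removes an additional hole. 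Either way the recurrence $T(k)\le T(k-1)+T(k-2)+O(k^2)$ is restored in all cases.

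Putting the pieces together: the recursion tree has $O(\rho^k)$ nodes, each costing $O(k^2)$ for re-applying the reduction rules and, at the leaves, for running the greedy algorithm on the $O(k^2)$-size residual instance, while the initial kernelization costs $O(|\mathcal{I}|)$. Hence the Consistency problem for \AMNV is solved in time $O(\rho^k k^2 + |\mathcal{I}|)$, as claimed.
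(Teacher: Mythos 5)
Your overall architecture (kernelize via Theorem~\ref{thm:kernel}, then a bounded search tree with the Fibonacci recurrence $T(k)\le T(k-1)+T(k-2)$) matches the paper, and your easy case (a hole-variable with at least three intervals) is fine. But the ``delicate case'' is exactly where the whole difficulty of the theorem lies, and there your argument has a genuine gap: you assert, without proof, that when every hole-variable has exactly two intervals, one of the two branches on the leading singleton value $v$ must destroy a \emph{second} hole (either because another hole-variable contains $v$, or because discarding $v$ triggers a cascade of \RedUnit/\RedDom that kills another hole). This dichotomy is false. The paper's own worked example (the kernelized instance of Fig.~\ref{fig:kernel}, with holes in $x_3$, $x_{11}$, $x_{13}$, so $k=3$) is a counterexample: selecting $4$ removes only $x_3$ and the hole-free $x_4$, and the ensuing cascade selects $9$, which removes only hole-free variables; discarding/removing the interval $\{4\}$ likewise eliminates only the hole of $x_3$. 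Both branches land on instances with exactly $k-1=2$ holes, so your recurrence degrades to $T(k)\le 2T(k-1)$, i.e.\ $O(2^k)$, and the claimed bound $O(\rho^k k^2+|\mathcal{I}|)$ is not established.

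The paper closes this case with a completely different idea, which is the real content of the proof: when both branches leave exactly $k-1$ holes, one shows structurally that (the first interval's variable has a single friend, no other optional interval meets it, and both sub-instances start at the same interval $I_f$) the two resulting instances coincide up to the value of $N$. One then compares the numbers $s_1,s_2$ of values selected below $\rep(I_f)$ in the two branches; the branch with the smaller count dominates the other (its inconsistency implies the other's inconsistency), so only \emph{one} recursive call is made in this case. That domination/subsumption argument — not an extra destroyed hole — is what keeps the search tree at $T(k)\le T(k-1)+T(k-2)$. To repair your proof you would need to supply this (or an equivalent) argument for the two-interval case; as written, the key step is missing and the stated replacement claim does not hold.
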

\begin{proof}
The first step of the algorithm invokes the kernelization algorithm and obtains an equivalent instance $\mathcal{I}'$ with $O(k^2)$ intervals in time $O(|\mathcal{I}|)$.

Now, we describe a branching algorithm checking the consistency of $\mathcal{I}'$. Let $I_1$ denote the first interval of $\mathcal{I}'$ (in the ordering by increasing right endpoint). $I_1$ is optional. Let $\mathcal{I}_1$ denote the instance obtained from $\mathcal{I}'$ by selecting $\rep(I_1)$ and exhaustively applying \longversion{Reduction }Rules \RedDom and \RedUnit. Let $\mathcal{I}_2$ denote the instance obtained from $\mathcal{I}'$ by removing $I_1$ (if $I_1$ had exactly one friend, this friend becomes required) and exhaustively applying \longversion{Reduction Rules }\RedDom and \RedUnit. Clearly, $\mathcal{I}'$ is consistent \myIff $\mathcal{I}_1$ or $\mathcal{I}_2$ is consistent.

Note that both $\mathcal{I}_1$ and $\mathcal{I}_2$ have at most $k-1$ holes. If either $\mathcal{I}_1$ or $\mathcal{I}_2$ has at most $k-2$ holes, the algorithm recursively checks whether at least one of $\mathcal{I}_1$ and $\mathcal{I}_2$ is consistent.
If both $\mathcal{I}_1$ and $\mathcal{I}_2$ have exactly $k-1$ holes, we note that in $\mathcal{I}'$,
\shortversion{\begin{itemize}[topsep=0pt, partopsep=0pt, itemsep=0pt]}%
\longversion{\begin{itemize}}
\item[(1)] $I_1$ has one friend,
\item[(2)] no other optional interval intersects $I_1$, and
\item[(3)] the first interval of both $\mathcal{I}_1$ and $\mathcal{I}_2$ is $I_f$, which is the third optional interval in $\mathcal{I}'$ if the second optional interval is the friend of $I_1$, and the second optional interval otherwise.
\end{itemize}
Thus, the instance obtained from $\mathcal{I}_1$ by removing $I_1$'s friend and applying \RedDom and \RedUnit may differ from $\mathcal{I}_2$ only in $N$. Let $s_1$ and $s_2$ denote the number of values smaller than $\rep(I_f)$ that have been selected to obtain $\mathcal{I}_1$ and $\mathcal{I}_2$ from $\mathcal{I}'$, respectively. If $s_1 \le s_2$, then the non-consistency of $\mathcal{I}_1$ implies the non-consistency of $\mathcal{I}_2$. Thus, the algorithm need only recursively check whether $\mathcal{I}_1$ is consistent. On the other hand, if $s_1 > s_2$, then the non-consistency of $\mathcal{I}_2$ implies the non-consistency of $\mathcal{I}_1$. Thus, the algorithm need only recursively check whether $\mathcal{I}_2$ is consistent.

The recursive calls of the algorithm may be represented by a search tree labeled with the number of holes of the instance. As the algorithm either branches into only one subproblem with at most $k-1$ holes, or two subproblems with at most $k-1$ and at most $k-2$ holes, respectively, the number of leaves of this search tree is
$
T(k) \le T(k-1)+T(k-2),
$
with $T(0)=T(1)=1$. Using standard techniques in the analysis of exponential time algorithms (see, e.g., \cite{FominKratsch10}), and by noticing that the number of operations executed at each node of the search tree is $O(k^2)$, the running time of the \longversion{branching }algorithm can be upper bounded by $O(\rho^k k^2)$.
\end{proof}

\noindent
For the example of Fig.~\ref{fig:kernel}, the instances $\mathcal{I}_1$ and $\mathcal{I}_2$ are computed by selecting the value $4$, and removing the interval $x_3$, respectively. The reduction rules select the value $9$ for $\mathcal{I}_1$ and the values $6$ and $10$ for $\mathcal{I}_2$. Both instances start with the interval $x_{11}$, and the algorithm recursively solves $\mathcal{I}_1$ only, where \longversion{the values $12$ and $13$ are selected, leading to} the solution $\set{4,9,12,13}$ \shortversion{is obtained} for the kernelized instance, which corresponds to the solution $\set{2,4,7,9,12,13}$ for the instance of Fig.~\ref{fig:input}.

\begin{corollary}\label{cor:HAC}
\HAC for an \AMNV constraint can be enforced in time $O(\rho^k \cdot k^2 \cdot |D|+|\mathcal{I}|\cdot |D|)$, where $k$ is the number of holes in the domains of the input instance $\mathcal{I}=(X,D,\dom,N)$, and $\rho=\frac{1+\sqrt{5}}{2}<1.6181$.
\end{corollary}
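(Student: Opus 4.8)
The plan is to reduce the task of enforcing \HAC to $|D|$ independent consistency queries, each dispatched by the algorithm of Theorem~\ref{thm:FPTalgo}. The crucial observation is that, for an \AMNV constraint, whether a value $v$ is supported for a variable $x$ with $v\in\dom(x)$ does \emph{not} depend on $x$: it is equivalent to the existence of a solution of $\mathcal{I}$ containing $v$, i.e.\ of a set $S\subseteq D$ with $|S|\le N$, $S\ni v$, and $S\cap\dom(y)\neq\emptyset$ for every $y\in X$. Indeed, a legal instantiation $\alpha$ with $\alpha(x)=v$ gives such a set $S=\alpha(X)$; conversely, given such an $S$ and a variable $x$ with $v\in\dom(x)$, assigning $v$ to $x$ and an arbitrary value of $S\cap\dom(y)$ to every other variable $y$ produces a legal instantiation that uses at most $|S|\le N$ values and has $\alpha(x)=v$. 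Hence enforcing \HAC amounts to computing the set $U=\SB v\in D \SM \mathcal{I}$ has no solution containing $v\SE$ and then deleting every $v\in U$ from every domain.

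Next I would check that a single pass over $D$ is correct, i.e.\ that deleting the values of $U$ cannot make any remaining value unsupported. If $v'\notin U$, fix a solution $S'$ of $\mathcal{I}$ with $v'\in S'$; since no value of $U$ occurs in any solution, $S'\cap U=\emptyset$, so $S'$ still meets every domain after the deletions and still witnesses that $v'$ is supported. Thus it suffices to compute $U$ against the input instance once and perform all deletions afterwards (if $U=D$ the domains become empty and the constraint is inconsistent, which is the correct outcome).

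It then remains to decide, for a fixed $v\in D$, whether $\mathcal{I}$ has a solution containing $v$. This is exactly the consistency of the instance $\mathcal{I}_v$ obtained by \emph{selecting} $v$ in $\mathcal{I}$ (delete all variables whose domain contains $v$, delete $v$ from $D$, decrement $N$): extending a solution of $\mathcal{I}_v$ by $v$ yields a solution of $\mathcal{I}$ containing $v$, and restricting a solution of $\mathcal{I}$ containing $v$ to $D\setminus\{v\}$ yields a solution of $\mathcal{I}_v$. As noted in the description of the \emph{select} operation, no new holes are created, so $\mathcal{I}_v$ has at most $k$ holes and size $O(|\mathcal{I}|)$; by Theorem~\ref{thm:FPTalgo} its consistency is decidable in time $O(\rho^k k^2+|\mathcal{I}|)$. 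Repeating this for each of the $|D|$ values of $D$ (and then carrying out the deletions, which is cheap) gives total running time $O(\rho^k k^2|D|+|\mathcal{I}||D|)$, as claimed.

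The only genuinely non-routine point—the "hard part"—is the first observation: recognizing that the support condition for a pair $(x,v)$ collapses to the variable-independent statement "$\mathcal{I}$ has a solution containing $v$", so that the naive $\sum_{x}|\dom(x)|$ support tests reduce to $|D|$ consistency tests, together with the one-pass correctness argument of the second paragraph. Everything after that (the equivalence with consistency of $\mathcal{I}_v$, the hole count, and the arithmetic on running times) is straightforward bookkeeping on top of Theorem~\ref{thm:FPTalgo}.
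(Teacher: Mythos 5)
Your proposal is correct and follows essentially the same route as the paper: observe that support for a value $v$ is independent of the variable, reduce to $|D|$ consistency tests on the instances obtained by selecting each $v$ (which create no new holes), and invoke Theorem~\ref{thm:FPTalgo} on each. The paper additionally inserts a practical shortcut (first test whether $(X,D,\dom,N-1)$ is consistent, in which case nothing can be filtered), but this does not affect the stated bound.
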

\begin{proof}
We first remark that if a value $v$ can be filtered from the domain of a variable $x$ (i.e., $v$ has no support for $x$), then $v$ can be filtered from the domain of all variables, as for any legal instantiation $\alpha$ with $\alpha(x')=v$, $x'\in X\setminus \set{x}$, the assignment obtained from $\alpha$ by setting $\alpha(x):=v$ is a legal instantiation as well. 
Also, filtering the value $v$ creates no new holes as the set of values can be set to $D\setminus \set{v}$.

Now we enforce \HAC by applying $O(|D|)$ times
the algorithm from Theorem \ref{thm:FPTalgo}.
Assume the instance $\mathcal{I}=(X,D,\dom,N)$ is consistent. If $(X,D,\dom,N-1)$ is consistent, then no value can be filtered.
Otherwise, check, for each $v\in D$, whether the instance obtained from selecting $v$ is consistent and filter $v$ if this is not the case.
\end{proof}

\section{Extended Global Cardinality Constraints}

An \EGC constraint $C$ is specified
by a set of variables $\longversion{\scope(C)=}\{x_1,\dots,x_n\}$ and for each value
$v\in \bigcup_{i=1}^n \dom(x_i)$ a set $D(v)$ of non-negative
integers. \longversion{The constraint}\shortversion{$C$} is consistent if each
variable can take a value from its domain such that the number of variables
taking a value $v$ belongs to the set $D(v)$.

The Consistency problem for \EGC constraints is 
$\NP$-hard \cite{QuimperLopezortizVanbeekGolynski04}\longversion{. However, if all
sets $D(\cdot)$ are intervals, then consistency can be checked in polynomial
time using network flows~\cite{Regin96}.}\shortversion{ in general and polynomial time solvable
if all sets $D(\cdot)$ are intervals~\cite{Regin96}.} By the result of \citex{BessiereEtal08},
the Consistency problem for \EGC constraints is \longversion{fixed-parameter tractable}\shortversion{FPT},
parameterized by the number of holes in the sets $D(\cdot)$. Thus R\'{e}gin's
result generalizes to instances that are close to the interval case.
\longversion{ }%
However, it is unlikely that \EGC constraints admit a polynomial kernel.
\begin{theorem}
  The Consistency problem for \EGC constraints, parameterized by the
  number of holes in the sets $D(\cdot)$, does not admit a polynomial kernel unless
  $\NP \subseteq \coNP/\text{\normalfont poly}$.
\end{theorem}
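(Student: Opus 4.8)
The plan is to prove the statement through the \emph{composition} framework of \cite{BodlaenderDowneyFellowsHermelin09}: if an NP-hard language $L$ admits an OR-composition --- more precisely a \emph{cross-composition}, i.e.\ one is allowed to first group the input instances by a polynomial equivalence relation and the parameter of the composed instance may grow polynomially with the logarithm of the number of composed instances --- into a parameterized problem $\QQQ$, then $\QQQ$ has no polynomial kernel unless $\NP\subseteq\coNP/\mtext{poly}$. Since the Consistency problem for \EGC{} constraints is NP-hard \cite{QuimperLopezortizVanbeekGolynski04} and trivially in \NP{} (guess an instantiation, count occurrences), it suffices to cross-compose an NP-hard restriction of it into itself, parameterized by the number of holes in the sets $D(\cdot)$. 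I would take $L$ to be the Consistency problem for \EGC{} constraints restricted to a syntactically uniform subclass --- for instance, instances in which every cardinality set that is not an interval equals $\{0,2\}$ --- and first check, by a light modification of the hardness reduction, that $L$ is still NP-hard; this uniformity is only a convenience that keeps the later bookkeeping honest.

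For the composition itself, fix the polynomial equivalence relation under which two instances of $L$ are equivalent iff they have the same number of variables, the same value universe, and the same list of values carrying a non-interval constraint; after padding, instances of size at most $s$ fall into at most $\mtext{poly}(s)$ classes, as required. Given $t$ equivalent instances $\mathcal{E}_1,\dots,\mathcal{E}_t$, I would build one \EGC{} instance $\mathcal{E}$: add a binary \emph{selector} on $O(\log t)$ fresh values that nondeterministically encodes an index $i\in\{1,\dots,t\}$ and contributes only $O(\log t)$ holes; keep a single shared copy of the few hole-carrying values, so the ``expensive'' part of the input is not duplicated $t$ times; and, driven by the selector, force the variables of every non-selected instance $\mathcal{E}_j$ onto a common \emph{parking} value whose cardinality set is an interval, so deactivated instances introduce no holes and cannot perturb the occurrence counts seen by the selected instance. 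All remaining instance-specific data --- the variable domains and the interval parts of the cardinality sets --- can be encoded using interval cardinality sets only and therefore does not increase the parameter. The number of holes of $\mathcal{E}$ is then $O(\log t)+\mtext{poly}(s)$, i.e.\ polynomial in $s+\log t$, and $\mathcal{E}$ is consistent \myIff some $\mathcal{E}_i$ is: from a legal instantiation of $\mathcal{E}_i$, set the selector to $i$, park the remaining instances, and keep the rest; conversely, read $i$ off the selector, observe that the parked instances sit on the parking value, and restrict.

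The crux --- and the step I expect to be the real obstacle --- is the design of the selector together with the masking mechanism. I need to enforce, within a budget of only $O(\log t)$ holes, that the variables of \emph{every} non-selected instance are removed \emph{entirely} from the shared values (no handful of stray variables leaking onto a shared value, which would falsify a count the selected instance relies on), while the selected instance still sees exactly its own constraints. The obvious ``all-or-nothing per instance'' device --- a value $w_j$ with $D(w_j)=\{0,n\}$ for each $j$ --- is forbidden, since it costs one hole per instance; the work is to route this all-or-nothing bookkeeping through the $O(\log t)$ selector values, so the hole count stays logarithmic in $t$ rather than linear. Once this gadget is built and its correctness verified, the theorem follows immediately from the composition theorem.
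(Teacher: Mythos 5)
Your proposal has a genuine gap: the whole argument rests on a composition gadget that you never construct. You correctly identify that the naive ``all-or-nothing per instance'' device (a value $w_j$ with $D(w_j)=\{0,n\}$ for each input instance $\mathcal{E}_j$) costs one hole per instance and is therefore useless, and you then defer the actual work --- routing the per-instance deactivation through only $O(\log t)$ hole-carrying selector values --- to a gadget whose existence you assume. But in an \EGC constraint the only mechanisms available are variable domains and the cardinality sets $D(\cdot)$, and interval cardinality sets cannot express the disjunctive, all-or-nothing behaviour you need (that is exactly why the interval case is polynomial-time solvable via flows \cite{Regin96}). There is no evident way for $O(\log t)$ non-interval values to force \emph{every} variable of each of the $t-1$ non-selected instances onto the parking value while leaving the selected instance untouched; this is not a routine verification you can postpone, it is the entire theorem. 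As written, the proof does not go through.

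The gap is also unnecessary, because a direct composition is not the only route to such lower bounds. The paper instead uses a \emph{polynomial parameter transformation}: by \cite{FortnowSanthanam08}, SAT parameterized by the number of variables already has no polynomial kernel unless $\NP \subseteq \coNP/\text{\normalfont poly}$, and by \cite{BodlaenderThomasseYeo09} a polynomial-time reduction that blows up the parameter only polynomially transfers this lower bound to any target problem whose unparameterized version is in $\NP$. The existing NP-hardness reduction of \cite{QuimperLopezortizVanbeekGolynski04} is already such a transformation: it maps a CNF formula on $k$ variables to an \EGC constraint in which every cardinality set is of the form $\{0,i_v\}$ and at most $2k$ of them have $i_v>1$, so the number of holes is at most $2k$. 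No new gadget is needed. If you want to salvage your approach, replace the self-composition by this transformation; otherwise you must actually build and verify the logarithmic selector, which is precisely the hard part you have skipped.
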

\begin{proof}
  \longversion{We establish the theorem by a combination of results from
  \citex{BodlaenderThomasseYeo09}, \citex{FortnowSanthanam08}, and
  \citex{QuimperLopezortizVanbeekGolynski04}. We need the following
  definitions. }The \emph{unparameterized version} of a parameterized
  problem $\PP\subseteq \Sigma^* \times \Nat$ is
  $\UP(\PP)=\SB x\#1^k \SM
  (x,k)\in P \SE\subseteq (\Sigma \cup \{ \#\})^*$ where $1$ is
  an arbitrary symbol from $\Sigma$ and $\#$ is a new symbol not in
  $\Sigma$. 
\longversion{ }%
  Let $\PP,\QQ\subseteq \Sigma^* \times \Nat$ be parameterized
  problems. \longversion{We say that }$\PP$ is \emph{polynomial parameter reducible}
  to $\QQ$ if there is a polynomial time computable function
  $f: \Sigma^* \times \Nat \rightarrow \Sigma^* \times \Nat$ and a
  polynomial $p$, such that for all $(x,k) \in \Sigma^* \times \Nat$, we have
  $(x,k) \in \PP$ \myIff $(x',k') = f(x,k) \in \QQ$,
  and $k'\leq p(k)$. 

  We prove the theorem by combining three known
  results.
  \shortversion{\begin{enumerate}[topsep=0pt, partopsep=0pt, itemsep=0pt]}
  \longversion{\begin{enumerate}}
  \item[(1)] \cite{BodlaenderThomasseYeo09} Let $\PP$ and $\QQ$ be
    parameterized problems such that $\UP(\PP)$ is $\NP$-complete,
    $\UP(\QQ)$ is in $\NP$, and $\PP$ is polynomial parameter
    reducible to~$\QQ$. If $\QQ$ has a polynomial
    kernel, then $\PP$ has a polynomial kernel.
  \item[(2)] \cite{FortnowSanthanam08} The problem of deciding the
    satisfiability of a CNF formula (SAT), parameterized by the number
    of variables, does not admit a polynomial kernel, unless
    $\NP \subseteq \coNP/\text{\normalfont poly}$.
  \item[(3)] \cite{QuimperLopezortizVanbeekGolynski04}
    Given a CNF formula $F$ on $k$ variables, one can construct in polynomial
    time an \EGC constraint $C_F$ such that
    \shortversion{\begin{enumerate}[topsep=0pt, partopsep=0pt, itemsep=0pt]}
    \longversion{\begin{enumerate}}
     \item[(i)] for each value $v$ of $C_F$, $D(v)=\{0,i_v\}$ for an integer $i_v>0$,
     \item[(ii)] $i_v>1$ for at most $2k$ values $v$, and
     \item[(iii)] $F$ is satisfiable \myIff $C_F$ is consistent.
    \end{enumerate}
    Thus, the number of holes in $C_F$ is at most twice the number of
    variables of $F$.
  \end{enumerate}
  We observe that (3) is a polynomial parameter reduction from SAT,
  parameterized by the number of variables, to the Consistency problem
  for \EGC\longversion{ constraints}, parameterized by the number of holes. Hence the
  theorem follows from (1) and (2).
  \end{proof}

\section{Conclusion}

We have introduced the concept of kernelization to the field of
constraint processing, providing both positive and negative results for
the important global constraints \NV and \EGC, respectively. On
the positive side, we have developed an efficient linear-time
kernelization algorithm for the consistency problem for
\AMNV constraints, and have shown how it can be used
to speed up the complete propagation of \NV and related
constraints. On the negative side, we have established a theoretical
result which indicates that \EGC constraints do not admit
polynomial kernels.

Our algorithms are efficient and the theoretical worst-case time bounds
do not include large hidden constants. We therefore believe that the
algorithms are practical, but we must leave an empirical evaluation for
future research. We hope that our results stimulate further
research on kernelization algorithms for constraint processing. 

 
\bibliographystyle{named}
{\shortversion{\footnotesize}

}
    
\end{document}